\documentclass[runningheads]{llncs}

\usepackage{eccv}

\usepackage{eccvabbrv}

\usepackage{graphicx}
\usepackage{booktabs}
\usepackage{multicol, multirow}
\usepackage{algorithm}
\usepackage{algpseudocode}
\usepackage[accsupp]{axessibility}  % Improves PDF readability for those with disabilities.

\usepackage[table]{xcolor}

\usepackage{hyperref}

\usepackage{orcidlink}

\usepackage{xcolor}

\definecolor{plotred}{HTML}{e15759}  % Màu đỏ của đường GACL (img-r)
\definecolor{plotblue}{HTML}{4e79a7} % Màu xanh của đường GACL (cub)

\begin{document}

% ---------------------------------------------------------------
% TODO REVIEW: Replace with your title
\title{Spectral-Aware Analytic Class-Incremental Learning for Long-Tailed Distributions} 

% TODO REVIEW: If the paper title is too long for the running head, you can set
% an abbreviated paper title here. If not, comment out.
\titlerunning{Geometry-Spectral Rectification (GSR)}

% TODO FINAL: Replace with your author list. 
% Include the authors' OCRID for the camera-ready version, if at all possible.
\author{Quyen Tran\textsuperscript{*}\inst{1} \and
Hai Nguyen\textsuperscript{*} \inst{2} \and
Quan Dao \inst{1} \and 
Zhuowei Li\textsuperscript{$\dagger$}\inst{1} \and
Nam Le \inst{3} \and 
Trung Le \inst{4} \and 
Dimitris Metaxas \inst{1}
}

% TODO FINAL: Replace with an abbreviated list of authors.
\authorrunning{Tran et al.}
% First names are abbreviated in the running head.
% If there are more than two authors, 'et al.' is used.

% TODO FINAL: Replace with your institution list.
\institute{
\begin{tabular}{@{}c@{\qquad}c@{}}
\inst{1}Rutgers University \quad \inst{2}Tuft University \quad 
\inst{3}HUST \quad \inst{4}Monash University
\end{tabular}
}
% \url{http://www.springer.com/gp/computer-science/lncs} }

\maketitle

\begingroup
\renewcommand{\thefootnote}{%
  \ifcase\value{footnote}%
  \or *%
  \or \ensuremath{\dagger}%
  \fi
}
\footnotetext[1]{Equal contribution.}
\footnotetext[2]{Work done outside Amazon.}
\endgroup

\begin{abstract}

Analytic Continual Learning (ACL) offers a computationally efficient alternative to gradient-based approaches. Recent ACL methods are based on Recursive Least Squares (RLS) and have achieved the state-of-the-art results compared to other alternatives.  However, they falter significantly in Class-Incremental Learning scenarios characterized by Long-Tailed distributions. While the ill-conditioning of the autocorrelation (Gram) matrix is a known limitation of RLS, we demonstrate that class imbalance exacerbates this issue into a distinct spectral pathology: "tail" classes suffer from severe spectral collapse, rendering their subspaces numerically indistinguishable from noise. Standard Ridge Regression ($L_2$) fails to address this effectively as it applies \textit{isotropic regularization} - a uniform penalty that is insufficient to stabilize the tail without over-shrinking the head. 
  % To address this, we propose Geometry-Spectral Rectification (GSR), a framework that synergizes \textit{feature-space manifold mixing with analytic learning}. Unlike other heuristic oversampling methods, we formulate GSR as a structured spectral regularizer. By synthesizing "virtual" features via Class-Adaptive Manifold Mixing, \textbf{we inject anisotropic densification that selectively inflates the eigenvalues of tail classes}. Theoretically, we show that this strategy acts as a data-dependent conditioner, restoring the numerical stability of the Gram matrix while preserving the semantic direction of the principal components. Extensive experiments on benchmark datasets show that GSR establishes a new state-of-the-art for analytic CIL, effectively bridging the gap between efficiency and robust generalization.
  To address this, we propose \textit{\textbf{Geometry-Spectral Rectification (GSR)}}, a theoretically grounded framework that treats long-tailed learning as a spectral regularization problem. Unlike standard isotropic regularization (Ridge) which uniformly penalizes all eigenvalues, GSR acts as an \textit{anisotropic spectral filter}, selectively inflating the collapsed eigenvalues of tail classes. 
  We construct a structured, data-dependent spectral perturbation matrix $\Delta$ that selectively inflates collapsed tail eigen-directions of the Gram matrix. 
  % We show that our construction provably increases the stable rank of inverted matrix, yielding a better-conditioned inverse in RL-based ACL.
  % We operationalize this via a novel \textit{Geodesic Manifold Mixing} strategy that synthesizes virtual features strictly within the hyperspherical embedding space. 
  % Crucially, while existing covariance estimation methods (e.g., Gaussian sampling) require computationally expensive $O(D^3)$ decompositions and violate the geometric constraints of pre-trained models, GSR achieves comparable accuracy with negligible $O(D)$ overhead. 
  Theoretical analysis proves that GSR guarantees an improved stable rank for the Gram matrix, ensuring numerical stability. Extensive experiments show that GSR establishes a new state-of-the-art for analytic CIL, offering a superior trade-off between computational efficiency and robust generalization in long-tailed settings. 
  \keywords{Pretrained-based Continual Learning  \and Long-tailed distribution}

%   \color{red}
% \textbf{TODO:} co len, co gang cho xong, quyet tam T.T

% - Figure: standard vs long-tailed 

% - Head/Tail acc 

% - Compare Ours vs (gauss sampling, inter-class mixup): Time, acc 

% - Add exp comments 

% - Proofread

% - Eigenspectrum of Gram matrix? (with vs w/o ours)

% - Condition number over time? (baselines vs ours) 

% - tSNE/PCA (Linear mixup vs Spherical mixup vs Gaussian sampling) 

% - Finish the main tables -- DONE
% }

% - Modify the symbols for eigenvalues (lambda), ridge coeff (xi), mixing coef (mu) -- DONE
% \color{black}

\end{abstract}

\section{Introduction}\label{sec:intro}

Analytic Continual Learning (ACL) \cite{ACIL_Zhuang_NeurIPS2022, GKEAL_Zhuang_CVPR2023, GACL_Zhuang_NeurIPS2024, DBLP:journals/corr/abs-2511-13880, mcdonnell2023ranpac} has recently emerged as a compelling "Green AI" alternative to gradient-based methods. By reformulating the learning objective into a Recursive Least Squares (RLS) problem \cite{DBLP:journals/corr/abs-2511-13880, GACL_Zhuang_NeurIPS2024, mcdonnell2023ranpac}, recent ACL methods achieve near-instantaneous updates via closed-form matrix operations, bypassing the computational burden of backpropagation, and even can achieve the upper-bound performance of Class Incremental Learning setting \cite{DBLP:journals/corr/abs-2511-13880, tran2025boosting,  momeni2025continual, tran-etal-2024-preserving}. This efficiency makes them particularly attractive for edge devices and real-time applications where resources are scarce.

However, the theoretical elegance of these methods relies on a critical, often overlooked condition: the numerical stability of the RLS update. In realistic scenarios characterized by Long-Tailed distributions, their performance degrades catastrophically (Fig.\ref{fig:baselines_imgr}). While the ill-conditioning of the autocorrelation (Gram) matrix is a known limitation of RLS, and existing works often attribute this to simple data scarcity \cite{DBLP:conf/cvpr/CuiJLSB19, cao2019learning, DBLP:conf/iclr/KangXRYGFK20, AIR_Fang_arXiv2024, DBLP:conf/aaai/ThanhLTLVN25}, we introduce a different view that the root cause is a fundamental linear algebra failure: \textit{\textbf{Spectral Collapse}}.

\begin{figure}
    \vspace{-7mm}
    \centering
    \includegraphics[width=\linewidth]{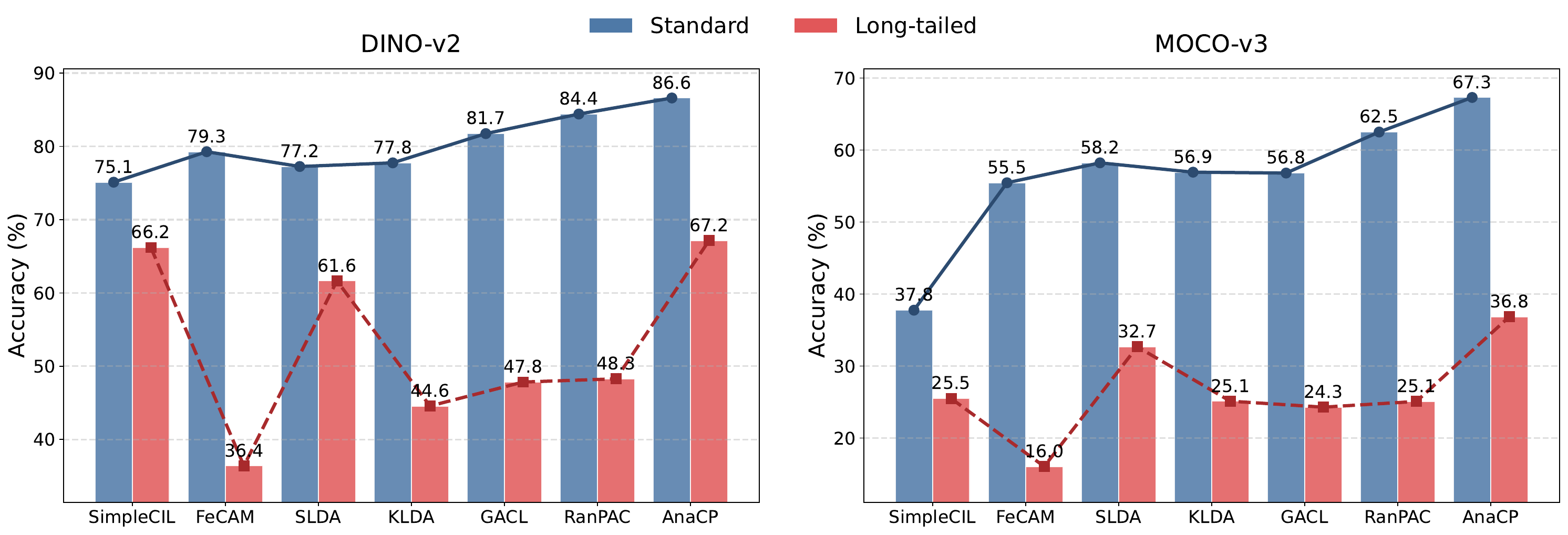}
    \vspace{-6mm}
    \caption{\textbf{Performance in} \textbf{\textit{standard and long-tailed settings}} of different ACL methods, on Split-Imagenet-R. See Fig.\ref{fig:standard_longtail_full} in Appendix \ref{additional_exp} for full results on different datasets.}
    \label{fig:baselines_imgr}
    \vspace{-6mm}
\end{figure}

In RLS-based ACL, the optimal weight solution $W^*$ is governed by the inverse of the autocorrelation (Gram) matrix $G$. We demonstrate that in imbalanced streams, $G$ suffers from severe spectral skewness. "Head" classes dominate the principal components (large eigenvalues), while "Tail" classes correspond to directions with vanishingly small variance (near-zero eigenvalues). This results in \textit{an ill-conditioned matrix} where the inversion process amplifies noise in the tail directions by orders of magnitude. Standard Ridge Regression ($\lambda I$) fails to address this because it applies a uniform penalty, which is either too weak to stabilize the tail or too strong, suppressing the head.

To bridge this gap, we propose \textbf{\textit{Geometry-Spectral Rectification (GSR)}}, a non-iterative framework that treats data imbalance as a spectral regularization problem, leveraging the statistical geometry of the feature manifold.
% Instead of learning complex hyperparameters via bilevel optimization, GSR leverages the statistical geometry of the feature manifold. 
Particularly, we introduce a class-adaptive manifold mixing strategy that synthesizes 'virtual' support features to densify the sparse tail manifolds, leading to a better-conditioned empirical covariance estimate.
Theoretically, we prove that our mixing strategy acts as a structured spectral filter, {selectively inflating the smallest eigenvalues of the Gram matrix without distorting the dominant principal components}. This effectively "heals" the broken spectrum, improving the stable rank and ensuring numerical stability during inversion. 
% Notably, while techniques like Mixup \cite{zhang2018mixup, DBLP:conf/icml/VermaLBNMLB19} are traditionally used for data augmentation to improve generalization boundaries, we repurpose them for a fundamentally different goal: \textbf{\textit{Spectral Rectifica››tion.}} We show that in the context of RLS, the geometric properties of the covariance matrix are more critical than the decision boundary itself. Thus, GSR is not merely a data augmentation strategy, but a spectral regularization mechanism tailored for the recursive inverse update.
% {\color{red} [I think we should hightlight some insights and performance here.]}

\paragraph{Contributions:} In summary, our contributions are threefold:
\begin{enumerate}
    \item \textbf{Spectral Diagnosis:} We provide a rigorous analysis showing that the failure of RLS-based ACL in long-tailed settings stems from the spectral collapse phenomenon of the Gram matrix.
    \item \textbf{Geometry-Spectral Rectification (GSR):} We propose a non-parametric rectification method that respects the hyperspherical geometry of modern PTMs. 
    % Unlike Gaussian-based covariance transfer methods which introduce "blind noise" (manifold intrusion) and incur cubic computational costs ($O(D^3)$), our Geodesic Mixing strategy operates in linear time ($O(D)$) and guarantees that synthesized samples remain on the true data manifold. 
    We prove that this strategy acts as a spectral preconditioner, explicitly improving the stable rank of the inverted matrix.
    
    % \item \textbf{Geometry-Spectral Rectification (GSR):} Instead of heuristic replay, we propose a theoretically grounded rectification method. We prove that our specific mixing strategy acts as a spectral preconditioner, guaranteeing a reduction in the Gram matrix's condition number ($\kappa(G)$) and ensuring numerical stability for tail classes.
    \item \textbf{SOTA Performance:} Extensive experiments on benchmark datasets show that GSR significantly outperforms existing ACL baselines, bridging the gap between analytic efficiency and robust generalization in long-tailed settings.
\end{enumerate}

\section{Related Works}\label{sec:related}

\subsection{Analytic Continual Learning (ACL)}
Unlike gradient-based Continual Learning methods \cite{DBLP:conf/iclr/SahaG021, DBLP:conf/cvpr/0002ZL0SRSPDP22, tran2026optimaltransportdrivenapproachcultivating, tran2025leveraginghierarchicaltaxonomiespromptbased} that require iterative optimization and often suffer from catastrophic forgetting or high computational costs, Analytic Continual Learning \cite{ACIL_Zhuang_NeurIPS2022, GKEAL_Zhuang_CVPR2023, GACL_Zhuang_NeurIPS2024, mcdonnell2023ranpac, DBLP:journals/corr/abs-2511-13880, goswami2023fecam, Hayes_2020_CVPR_Workshops} aims to acquire new knowledge via closed-form solutions. Early works \cite{ACIL_Zhuang_NeurIPS2022, GKEAL_Zhuang_CVPR2023, GACL_Zhuang_NeurIPS2024} utilized Recursive Least Squares (RLS) to update linear classifiers incrementally. 
Recently, the proliferation of large-scale frozen pre-trained models (PTMs) has revitalized ACL \cite{mcdonnell2023ranpac, DBLP:journals/corr/abs-2511-13880, tran2025boosting, momeni2025continual}, achieving state-of-the-art results, and even each upper bound performance on balanced benchmarks.
While these methods achieve high efficiency, they implicitly assume a relatively balanced data stream. Our analysis reveals that in Long-Tailed (LT) scenarios, the standard RLS update used in SOTA methods like GACL \cite{GACL_Zhuang_NeurIPS2024}, RanPAC \cite{mcdonnell2023ranpac}, and AnaCP \cite{DBLP:journals/corr/abs-2511-13880} becomes numerically unstable. The Gram matrix for tail classes suffers from \textit{spectral collapse}, which standard isotropic regularization (Ridge) cannot fix without harming head classes. Our GSR framework specifically targets this spectral pathology.

\subsection{Long-Tailed Learning in Continual Learning Settings}

Real-world data streams frequently exhibit long-tailed distributions, where head classes dominate, and tail classes remain severely scarce. In static learning, this imbalance is commonly addressed through \textit{Re-sampling} (e.g., SMOTE \cite{DBLP:journals/jair/ChawlaBHK02}), \textit{Re-weighting} (e.g., Class-Balanced Loss \cite{DBLP:conf/cvpr/CuiJLSB19}), or \textit{Decoupling} strategies \cite{DBLP:conf/iclr/KangXRYGFK20}. In Continual Learning, these ideas have been extended to mitigate the compounded effects of imbalance and catastrophic forgetting, for example via bias-correction layers \cite{wu2019large}, cosine normalization \cite{Hou_2019_CVPR}, and balanced replay buffers \cite{kim2020imbalanced}.
% all of which aim to stabilize gradient-based optimization.
Within the ACL paradigm, Fang et al. \cite{AIR_Fang_arXiv2024} introduce inverse-frequency re-weighting into the analytic solution. However, such adjustments remain fundamentally \textit{scalar modulations}: they rescale contributions without enriching the underlying feature geometry, and cannot recover missing directions in the representation space (See results in Tables \ref{table:main_results_dino} and \ref{table:main_results_moco}). 
We argue that, in RLS-based ACL, tail-class degradation originates from data scarcity but ultimately manifests as \textit{spectral collapse}—the ill-conditioning of the autocorrelation matrix that governs the analytic solution. While existing approaches operate at the level of data counts or gradient magnitudes, they do not explicitly address this algebraic bottleneck. In contrast, we frame long-tailed ACL as a spectral conditioning problem and target it directly through explicit spectral regularization.

% \subsection{Feature Space Regularization and Manifold Mixing}
\subsection{Implicit Spectral Regularization via Geodesic Interpolation}

While standard ACL relies on explicit scalar regularization (Ridge, $\tau I$) to invert the Gram matrix, this isotropic penalty is suboptimal for long-tailed distributions. A more efficient alternative is implicit regularization via feature interpolation \cite{NIPS2019_9426, zhang2018mixup}, which is asymptotically equivalent to adding a data-dependent regularizer to the Hessian \cite{NIPS2000_ba9a56ce, 10.1162/neco.1995.7.1.108}.
However, existing strategies like Manifold Mixup \cite{DBLP:conf/icml/VermaLBNMLB19} are ill-suited for our setting. First, they employ linear interpolation to train backbone parameters, whereas our framework targets covariance conditioning for the analytic inverse on frozen PTM embeddings. Second, linear interpolation on normalized embeddings creates a geometric inconsistency: it traverses the chord rather than the arc \cite{Deng_2019_CVPR, karras2018progressive}, causing norm shrinkage and energy suppression. This prevents effective inflation of collapsed tail-class eigenvalues.
To overcome this, we propose Geodesic Interpolation (or Spherical Mixup) \cite{10.1145/325334.325242, karras2018progressive}. By strictly following the manifold geometry, our approach preserves the unit norm constraint and prevents the variance attenuation characteristic of linear mixing. This ensures the spectral perturbation $\Delta_{struct}$ effectively reconditions the tail spectrum (Theorem \ref{thm:stable_rank}) without the energy decay inherent in linear chordal traversal.

\section{Method}\label{sec:method}

\subsection{Preliminaries: Analytic Continual Learning}
This work consider Class Incremental Learning (CIL) setting, where we have a sequence of distinct tasks $t = 1, \dots, T$. At each task, the model needs to deal with the dataset $D_t = (X_t, Y_t)$, so that we have $(Z_t, Y_t)$, which are input features extracted from a pretrained model and their corresponding labels. RSL-based ACL  methods \cite{ACIL_Zhuang_NeurIPS2022, GKEAL_Zhuang_CVPR2023, GACL_Zhuang_NeurIPS2024, mcdonnell2023ranpac, DBLP:journals/corr/abs-2511-13880} aims to minimize the Ridge Regression objective:
\begin{equation}
    \mathcal{L}(W) = \|ZW - Y\|_F^2 + \tau \|W\|_F^2
\end{equation}
where $Z$ and $Y$ are all the features and their corresponding labels observed by the model so far. And $W$ is the weight matrix of the model's classification head, whose pretrained backbone is frozen. The optimal solution is given in closed form by:
\begin{equation}
    W^* = (G + \tau I)^{-1} Z^T Y
\end{equation}
where $G = Z^T Z$ is the Gram matrix. In CIL setting, $G$ and $Z^T Y$ can be updated recursively \cite{mcdonnell2023ranpac, DBLP:journals/corr/abs-2511-13880}.
The numerical stability of the inversion of $(G + \tau I)$ acts as a proxy for the bias-variance tradeoff, thereby influencing the model's generalization capability.
% The generalization capability of this analytic solution is intimately linked to the spectral properties and the conditioning of the matrix $(G + \tau I)$
% The generalization capability of this analytic solution is strictly bounded by the stability of the inversion of $(G + \tau I)$.

\subsection{The Limitation of Isotropic Regularization}
\label{sec:theory}

In this section, we analyze why standard RLS-based ACL fails in Long-Tailed (LT) scenarios through the lens of spectral geometry. We argue that the standard $L_2$ regularization creates a bad trade-off due to its \textit{isotropic} nature.

\paragraph{a. The Dilemma of Isotropic Regularization (Ridge)}
Let $G = U \Sigma U^T$ be the eigendecomposition of the Gram matrix, with eigenvalues $\lambda_1 \geq \lambda_2 \geq \dots \geq \lambda_d \geq 0$. The inversion in Eq. (2) modifies the spectrum as:
\begin{equation}
    i^{th}\text{ eigenvalue of } (G + \tau I)^{-1} = \frac{1}{\lambda_i + \tau}
\end{equation}
Here, the term $\tau I$ acts as \textit{Isotropic Regularization}. It adds a uniform spherical penalty $\tau$ to the variance in \textit{all} directions of the feature space, regardless of the data distribution.

\paragraph{b. The Spectral Pathology of Long-Tailed Data:}
In imbalanced CIL, "tail" classes span subspaces with vanishingly small variance, leading to spectral collapse ($\lambda_{tail} \approx 0$). This creates a dilemma:
\begin{itemize}
    \item \textit{Under-regularization (Small $\tau$):} If $\tau$ is small, the inverse term $(\lambda_{tail} + \tau)^{-1}$ explodes, causing the model to overfit to noise in the tail directions.
    \item \textit{Over-regularization (Large $\tau$):} If we increase $\tau$ to stabilize the tail, we inadvertently suppress the "head" classes (where $\lambda_{head} \gg \tau$). The term $(\lambda_{head} + \tau)^{-1}$ shrinks significantly, causing bias and underfitting for majority classes.
\end{itemize}
Thus, isotropic regularization cannot simultaneously stabilize the tail and preserve the head.

\subsection{Geometry-Spectral Rectification (GSR)}

To resolve this, we propose GSR, which utilizes Class-Adaptive Manifold Mixing to inject \textit{Anisotropic Information} into the collapsed dimensions.

\subsubsection{a. GSR as Structured Anisotropic Regularization}
Let $z_1, z_2$ be representations of two samples from a tail class with covariance $\Sigma_{class}$. We generate a rectified sample $\tilde{z}$ by mixing $z_1$ with $z_{2}$:
\begin{equation}
    \tilde{z} = \frac{\gamma z_1 + (1 - \gamma)z_{2}}{\| \gamma z_1 + (1 - \gamma)z_{2} \|_2}, \quad \text{where } \gamma \text{ is mixing coefficient}
    \label{mixing}
\end{equation}
% \begin{equation}
%     \mathbf{x}' = \frac{\gamma \mathbf{x}_i + (1-\gamma)\mathbf{x}_j}{\| \gamma \mathbf{x}_i + (1-\gamma)\mathbf{x}_j \|_2}, \quad \text{where } \gamma \sim \text{Beta}(\alpha, \beta)
% \end{equation}
We utilize Spherical Mixup (detailed in Appendix \ref{spherical_mix}), which introduces non-linearity via hypersphere projection, which prevents the synthetic sample from lying in the linear span of the supports. This operation prevents the synthetic samples from collapsing onto the linear convex hull, thereby implicitly regularizing the covariance rank.
In addition, from the perspective of Vicinal Risk Minimization \cite{NIPS2000_ba9a56ce}, training on mixed samples is asymptotically equivalent to training with a data-dependent regularizer. The second moment of the mixed data transforms as:
\begin{equation}
    \mathbb{E}[\tilde{x}\tilde{x}^T] \approx G_{original} + \Delta
\end{equation}
Crucially, unlike $\tau I$, the term $\Delta$ is proportional to the class covariance $\Sigma_{class}$. This implies that GSR acts as an \textit{Anisotropic Spectral Regularizer}:
\begin{enumerate}
    \item \textit{Directional Alignment:} $\Delta$ adds "mass" (variance) specifically along the principal components of the class manifold, not in orthogonal null-spaces.
    \item \textit{Tail Rectification:} For a tail class, this selectively inflates the eigenvalues $\lambda_{tail}$ ("thickening" the manifold) without needing a large global $\gamma$.
\end{enumerate}

\subsubsection{b. Theoretical Guarantee}

Define $\tilde{G} = G + \Delta$ is update of $G$ after pertubation. We formally show that this strategy improves numerical stability through the increase of \emph{stable rank}, which supports the success of GSR in recovering the intrinsic manifold structure of the data, while reducing overfitting driven by heavy-tail behavior in the spectrum. Formally, \emph{stable rank} for a matrix $G$ is
${\text{sr}(G)=\frac{\sum_{i=1}^d \lambda_i^2}{\lambda_1^2}}.$
\begin{theorem}[Quantitative stable-rank improvement under tail-mixup]\label{thm:stable_rank}
Fix an integer $r\in\{1,\ldots,d-1\}$ and let $U_h\in\mathbb{R}^{d\times r}$ collect the top-$r$
orthonormal eigenvectors of $G$, and $U_t\in\mathbb{R}^{d\times (d-r)}$ collect the remaining ones. Define the head/tail projectors $P_h=U_hU_h^\top, P_t=U_tU_t^\top$. Assume the following blockwise quantitative bounds:
$$
\|U_h^\top \Delta U_h\|_2 \le \varepsilon,\quad
\|U_h^\top \Delta U_t\|_2 \le \eta, \quad \|U_t^\top \Delta U_t\|_2 \le \bar m,\quad
m \le \lambda_{\min}(U_t^\top \Delta U_t),
$$
for some parameters $\varepsilon\ge 0$, $m>0$, $\bar m\ge m$, and $\eta\ge 0$. Define
\[
S_\tau \;\triangleq\; \|G+\tau I\|_F^2
\;=\;\sum_{i=1}^d (\lambda_i+\tau)^2,
\]
and
\[
\Lambda_+ \;\triangleq\;
\frac{(\lambda_1+\varepsilon)+(\lambda_{r+1}+\bar m)
+\sqrt{\big((\lambda_1+\varepsilon)-(\lambda_{r+1}+\bar m)\big)^2+4\eta^2}}{2}.
\]
Then for any ridge parameter $\tau\ge 0$,
\begin{equation}\label{eq:sr_lower_bound}
\text{sr}(\widetilde G+\tau I)
\;\ge\;
\frac{S_\tau + (d-r)m^2}{(\Lambda_+ + \tau)^2}.
\end{equation}
Moreover, a sufficient
condition for \emph{strict} improvement is
\begin{equation}\label{eq:sr_strict_condition_general}
(d-r)m^2
\;>\;
S_\tau\left(\frac{(\Lambda_+ + \tau)^2}{(\lambda_1+\tau)^2}-1\right).
\end{equation}
\end{theorem}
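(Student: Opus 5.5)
The plan is to bound the numerator and the denominator of $\text{sr}(\widetilde G+\tau I)=\|\widetilde G+\tau I\|_F^2/\lambda_{\max}(\widetilde G+\tau I)^2$ separately. Everything will be written in the eigenbasis $U=[U_h\;U_t]$ of $G$. In that basis $G+\tau I=\mathrm{diag}(\Lambda_h+\tau I,\Lambda_t+\tau I)$, where $\Lambda_h=\mathrm{diag}(\lambda_1,\ldots,\lambda_r)$ and $\Lambda_t=\mathrm{diag}(\lambda_{r+1},\ldots,\lambda_d)$. The perturbation becomes a $2\times2$ block matrix $U^\top\Delta U$ with blocks $\Delta_{hh},\Delta_{ht},\Delta_{th}=\Delta_{ht}^\top,\Delta_{tt}$. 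Throughout I use that $\Delta$ is symmetric positive semidefinite, which holds because it is the increment of a second-moment matrix $\mathbb{E}[\tilde x\tilde x^\top]$ produced by the mixed samples. The theorem does not list this among its hypotheses, so the proof will state it explicitly.

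\textbf{Step 1 (numerator).} Put $A=G+\tau I$, which is PSD, and expand
\[
\|A+\Delta\|_F^2=\|A\|_F^2+2\,\mathrm{tr}(A\Delta)+\|\Delta\|_F^2 .
\]
Since both $A$ and $\Delta$ are PSD, $\mathrm{tr}(A\Delta)=\mathrm{tr}(A^{1/2}\Delta A^{1/2})\ge 0$, so this cross term can be dropped. The Frobenius norm is unitarily invariant and dominates the norm of any diagonal block, so $\|\Delta\|_F^2\ge\|\Delta_{tt}\|_F^2=\sum_j\mu_j(\Delta_{tt})^2$, where $\mu_j$ are the eigenvalues of $\Delta_{tt}$. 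Each $\mu_j\ge m>0$ by hypothesis, which gives $\|\Delta\|_F^2\ge(d-r)m^2$. Together with $\|A\|_F^2=S_\tau$, the numerator is at least $S_\tau+(d-r)m^2$.

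\textbf{Step 2 (denominator).} I will show $\lambda_{\max}(\widetilde G)\le\Lambda_+$. Take a unit vector $x=Ua_h+Ua_t$ with $\alpha=\|a_h\|$, $\beta=\|a_t\|$ and $\alpha^2+\beta^2=1$. Using the block bounds,
\[
x^\top\widetilde Gx\le(\lambda_1+\varepsilon)\alpha^2+2\eta\alpha\beta+(\lambda_{r+1}+\bar m)\beta^2 .
\]
This uses $a_h^\top\Lambda_ha_h\le\lambda_1\alpha^2$, $a_t^\top\Lambda_ta_t\le\lambda_{r+1}\beta^2$, $|a_h^\top\Delta_{ht}a_t|\le\eta\alpha\beta$, and the operator-norm bounds $\varepsilon$ and $\bar m$ on $\Delta_{hh}$ and $\Delta_{tt}$. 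The right-hand side is a quadratic form of the symmetric $2\times2$ matrix
\[
\begin{pmatrix}\lambda_1+\varepsilon&\eta\\ \eta&\lambda_{r+1}+\bar m\end{pmatrix}
\]
evaluated at the unit vector $(\alpha,\beta)$. It is therefore at most that matrix's largest eigenvalue, which is exactly $\Lambda_+$. Adding $\tau I$ shifts every eigenvalue by $\tau$, so $\lambda_{\max}(\widetilde G+\tau I)\le\Lambda_++\tau$. Dividing the bound of Step 1 by the square of this gives \eqref{eq:sr_lower_bound}.

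\textbf{Step 3 (strict improvement).} The unperturbed value is $\text{sr}(G+\tau I)=S_\tau/(\lambda_1+\tau)^2$. It therefore suffices that
\[
\frac{S_\tau+(d-r)m^2}{(\Lambda_++\tau)^2}>\frac{S_\tau}{(\lambda_1+\tau)^2} .
\]
Multiplying through by $(\Lambda_++\tau)^2$ and rearranging gives exactly \eqref{eq:sr_strict_condition_general}.

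The only delicate point is the sign of the cross term $\mathrm{tr}(A\Delta)$ in Step 1. Without positive semidefiniteness of $\Delta$ (or at least of $\Delta_{hh}$ together with $\Delta_{tt}\succeq mI$), this term could cancel the gain coming from the tail block. I therefore plan to state that assumption explicitly and justify it from the second-moment construction of the mixed samples. The two-block Rayleigh-quotient reduction in Step 2 is routine by comparison.
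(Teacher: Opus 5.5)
Your proposal is correct and follows the paper's proof essentially step for step. Both arguments lower-bound the numerator by dropping the nonnegative cross term and keeping only the tail block $U_t^\top\Delta U_t$, bound the denominator by reducing the Rayleigh quotient to the $2\times 2$ matrix whose top eigenvalue is $\Lambda_+$, and obtain the strict-improvement condition by cross-multiplication. Your explicit assumption $\Delta\succeq 0$ is the one the paper uses tacitly (in its Frobenius lemma for PSD matrices and when it asserts that $\widetilde G+\tau I$ is PSD), so stating it is a genuine improvement; also fix the slip $x=Ua_h+Ua_t$, which should read $x=U_ha_h+U_ta_t$.
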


\begin{corollary}\label{cor:ideal_tail_only}
Assume $ \Delta\succ 0$ satisfies $U_h^\top \Delta U_h=0$ and $U_h^\top \Delta U_t=0$
(i.e., $\Delta$ is supported entirely in the tail eigenspace of $G$), and the top
eigenvalue remains head-dominated: $\lambda_1 \ge \lambda_{r+1} + \|U_t^\top \Delta U_t\|_2$.
Then 
$$\text{sr}(\tilde{G}) > \text{sr}(G)$$
\end{corollary}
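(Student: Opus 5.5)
The plan is to specialize Theorem~\ref{thm:stable_rank} to $\tau=0$ and to the ideal block structure assumed in the corollary, and then read the strict inequality off the sufficient condition \eqref{eq:sr_strict_condition_general}.

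\textbf{Step 1: choose the parameters.} I would set $\varepsilon=0$ and $\eta=0$, which is allowed because $U_h^\top\Delta U_h=0$ and $U_h^\top\Delta U_t=0$. I would take $\bar m=\|U_t^\top\Delta U_t\|_2$ and $m=\lambda_{\min}(U_t^\top\Delta U_t)$.

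One point needs care here. Read literally, $\Delta\succ 0$ is incompatible with $U_h^\top\Delta U_h=0$ whenever $r\ge 1$. I would therefore interpret the positivity hypothesis as positive definiteness of $\Delta$ on the tail eigenspace, i.e.\ $U_t^\top\Delta U_t\succ 0$. This is exactly what is needed, and it gives $m>0$, since the compression of a positive definite form to a subspace is positive definite. Also $\bar m\ge m$ holds trivially, so all hypotheses of Theorem~\ref{thm:stable_rank} are met.

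\textbf{Step 2: evaluate $\Lambda_+$.} With $\varepsilon=\eta=0$ the square root equals $|\lambda_1-(\lambda_{r+1}+\bar m)|$, so $\Lambda_+=\max\{\lambda_1,\lambda_{r+1}+\bar m\}$. The head-dominance assumption $\lambda_1\ge\lambda_{r+1}+\bar m$ then gives $\Lambda_+=\lambda_1$. Note that $\lambda_1\ge\bar m\ge m>0$, so $\lambda_1>0$ and the stable rank of $G$ is well defined.

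\textbf{Step 3: conclude.} Plugging into \eqref{eq:sr_lower_bound} with $\tau=0$ gives
\[
\text{sr}(\widetilde G)\ \ge\ \frac{S_0+(d-r)m^2}{\lambda_1^2}\ >\ \frac{S_0}{\lambda_1^2}=\text{sr}(G),
\]
where $S_0=\sum_i\lambda_i^2$ and the strict inequality uses $(d-r)m^2>0$, because $d-r\ge 1$ and $m>0$. Equivalently, the right-hand side of \eqref{eq:sr_strict_condition_general} vanishes because $(\Lambda_++\tau)^2/(\lambda_1+\tau)^2=1$, and its left-hand side is positive.

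\textbf{Main obstacle.} The only real difficulty is the hypothesis mismatch in Step 1: making precise which positivity of $\Delta$ is actually used. Once $m>0$ is justified from positivity of the tail block, the rest is direct substitution.
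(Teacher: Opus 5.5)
Your proposal is correct and matches the intended argument. The paper writes out no separate proof of the corollary; it is meant as Theorem~\ref{thm:stable_rank} specialized to $\tau=0$, $\varepsilon=\eta=0$, $\bar m=\|U_t^\top\Delta U_t\|_2$, where head dominance gives $\Lambda_+=\lambda_1$ and the strict condition reduces to $(d-r)m^2>0$. You are right that $\Delta\succ 0$ literally contradicts $U_h^\top\Delta U_h=0$, which makes the statement as written vacuous. Your repair $U_t^\top\Delta U_t\succ 0$ also gives, for symmetric $\Delta$, $\Delta=U_t(U_t^\top\Delta U_t)U_t^\top\succeq 0$, and you should say so explicitly: this is the unstated PSD hypothesis that the proof of Theorem~\ref{thm:stable_rank} actually uses through Lemma~\ref{lem:psd_frob}.
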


We give the full proof for all these theorems in the Appendix \ref{sec:appendix_proof}. Theorem~\ref{thm:stable_rank} shows that GSR \emph{strictly increases} $\mathrm{sr}(\widetilde G+\tau I)$ by injecting structured mass into the tail eigenspace (via $\Delta$), while controlling the increase of the top eigenvalue through blockwise bounds. In other words, GSR does not blindly add isotropic noise; it \emph{densifies} the collapsed tail spectrum in a targeted way.
% This has two concrete consequences:

% \begin{itemize}
% \item \textbf{Reduced sensitivity to tail noise and stream perturbations.}
% Spectral collapse creates directions with near-zero variance that are numerically indistinguishable from noise; any small disturbance (domain shift, outliers, label noise) can dominate these directions after inversion.
% By increasing $\mathrm{sr}(\widetilde G+\tau I)$, GSR prevents the solution from being supported on an extremely low-dimensional subspace, and makes the analytic estimate less brittle in the tail regime.

% \item \textbf{Better retention and generalization for tail concepts.}
% A higher stable rank means more directions remain ``alive'' for representing minority classes.
% This improves the representational capacity available to tail classes under scarcity, reducing the tendency to ignore them (near-random tail accuracy) and mitigating forgetting across tasks.
% \end{itemize}

% \vspace{0.25em}
% \noindent\textbf{Connection to anisotropic regularization.}
% Standard Ridge adds $\tau I$ uniformly across all directions, which cannot selectively repair collapsed tail dimensions without over-shrinking head directions.
% In contrast, GSR acts as an \emph{anisotropic spectral rectifier}: it selectively inflates the collapsed eigenvalues of tail subspaces through $\Delta$, increasing stable rank while preserving head structure.
% This resolves the stability--plasticity tension at the spectral level and explains the consistent empirical gains of GSR in long-tailed analytic CIL.

Furthermore, GSR acts as an \textit{Anisotropic Spectral Regularizer}. Unlike standard $L_2$ regularization (Ridge), which adds a constant damping factor $\tau I$ uniformly across all directions (penalizing head and tail classes equally), GSR selectively "inflates" only the collapsed eigenvalues of tail subspaces via $\Delta$. This allows the model to retain high precision on head classes (where eigenvalues are already large) while stabilizing the learning of tail classes, effectively resolving the stability-plasticity dilemma at the spectral level.

One might be concerned that injecting $\Delta$ into the null-space could amplify noise. However, in the long-tailed regime, the zero eigenvalues in tail classes predominantly stem from sample scarcity (feature collapse) rather than the absence of semantic information. Crucially, unlike standard Tikhonov regularization (Ridge Regression), which adds isotropic noise (white noise), our injected $\Delta$ is anisotropic. 
% \textcolor{red}{By leveraging statistics from head classes, we selectively restore variance along meaningful semantic directions while suppressing actual noise.} 
By leveraging the latent geometric structure within the scarce samples themselves, we selectively inflate the collapsed spectral components of tail classes, ensuring the restored geometry reflects plausible semantic variations rather than arbitrary disturbances.
% This ensures that the restored geometry reflects plausible intra-class variations rather than arbitrary disturbances 
% \subsubsection{Implication of Theorem 1}
% The inequality derived in Eq. (\ref{eq:inequality_condition}) provides a theoretical guarantee for the effectiveness of GSR.
% \begin{itemize}
% \item \textbf{Restoring Feature Diversity:} The condition $\frac{\delta_D}{\lambda_D} \gg \frac{\delta_1}{\lambda_1}$ implies that GSR injects variance precisely where it is needed most—the collapsed tail dimensions. This counteracts the "Spectral Collapse" phenomenon, ensuring that tail classes acquire a distinct feature subspace rather than being treated as noise.
% \item \textbf{Optimization Stability:} A reduced condition number $\kappa(\tilde{G})$ implies a more spherical loss landscape. In the context of Graph Convolutional Networks (GCNs), this improves the convergence rate of gradient descent and prevents the gradients of dominant head classes from overshadowing those of tail classes during backpropagation.
% \item \textbf{Robustness to Over-smoothing:} By fortifying the smallest eigenvalues, GSR prevents the feature representations from degenerating into a single point (the over-smoothing problem) as the network depth increases, thereby preserving the discriminative power of tail nodes.
% \end{itemize}

\subsubsection{c. Our technical method}
\paragraph{Adaptive Mixing Strategy.}
To operationalize Theorem \ref{thm:stable_rank}, we define the mixing intensity $\alpha_c$ for class $c$ inversely proportional to its spectral health:
\begin{equation}
    \alpha_c = \alpha_{base} + (1 - \alpha_{base}) \cdot e^{-\xi \cdot N_c}
    \label{alpha_mix}
\end{equation}
where $N_c$ is the sample count of class $c$, and $\xi$ is the decay rate. This ensures that tail classes undergo intensive manifold mixing (high variance injection), while head classes preserve their original precise feature statistics.

% \subsubsection{Rectified Gram Matrix Update}
% For an incoming batch, let $x_i$ be a sample of class $y_i$. We retrieve a stored prototype or random sample $x_{ref}$ from the buffer belonging to the same class. We generate a mixed feature $\tilde{x}_i$ using Eq. (4). Crucially, we do not just augment data; we modify the Gram matrix update. The rectified update becomes:
% \begin{equation}
%     \tilde{G}_t = G_{t-1} + \sum_{i \in \mathcal{B}} \tilde{x}_i \tilde{x}_i^T
% \end{equation}

\paragraph{Rectified Gram Matrix Update via Tail Mixup.}
% Instead of relying on external memory buffers or stored prototypes, w
We leverage the scarce real samples available for tail classes to induce spectral rectification directly.
% \quad \text{where } \lambda \sim \text{Beta}(\alpha, \beta)
Let $\mathcal{X}_{real} = \{x_i\}_{i=1}^{N_c}$ be the set of current real samples, and $\mathcal{Z}_{real} = \{z_i\}_{i=1}^{N_c}$ be the set of their corresponding representations for a specific class $c$. 

Due to the scarcity of samples ($N_c \ll D$, $D$ is the feature dimension), the empirical covariance of $\mathcal{Z}_{real}$ is rank-deficient. To counteract this phenomenon, we synthesize a set of augmented samples $\mathcal{Z}_{aug}$ by interpolating along the geodesic paths between real data pairs. Unlike standard linear mixup, which shrinks the feature norm, our spherical formulation preserves the hyper-spherical geometry:
\vspace{-2mm}
\begin{equation}
    \tilde{z} = \frac{\gamma z_i + (1-\gamma) z_j}{\| \gamma z_i + (1-\gamma) z_j \|_2}, \quad \text{where } z_i, z_j \in \mathcal{Z}_{real} \text{, and } \gamma \sim \text{Beta}(\alpha_c, \alpha_c)
\label{eq:spherical_mixup}
\end{equation}
% \vspace{-2mm}
\begin{algorithm}[t]

\caption{Geometry-Spectral Rectification (GSR) for RLS-based ACL}
% {\color{red} I do not see any statement refer to this Algorithm}
\label{alg:gsr}
\begin{algorithmic}[1]
\Require Initial Gram matrix $G_0 = \mathbf{0}$, Cross-correlation $Q_0 = \mathbf{0}$, Regularization $\tau$.
\Require Data coming in a sequence of tasks $\mathcal{T} = \{ \mathcal{D}_1, \dots, \mathcal{D}_T \}$.
\Ensure Class weights $W_{final}$.

\For{task $t = 1$ to $T$}
    \State Receive data $\mathcal{D}_t = \{(\mathbf{x}_i, y_i)\}_{i=1}^{N_t}$.
    \State Extract features $Z_t = f_\theta(X_t) \in \mathbb{R}^{N_t \times D}$, where $D$ is feature dimension.
    \State Initialize augmented buffer $\mathcal{Z}_{\text{aug}} = \emptyset$.
    \For{class $c \in \mathcal{C}_t$}
        \State \textbf{// 1. Spectral Diagnosis \& Adaptive Mixing}
        \State Calculate sample count $N_c$.
        \State Compute mixing intensity $\alpha_c$ \Comment{Using Eq.\ref{alpha_mix}}
        \State $\quad \alpha_c = \alpha_{\text{base}} + (1 - \alpha_{\text{base}}) \cdot e^{-\xi \cdot N_c}$
    
    \State \textbf{// 2. Geometry-Spectral Rectification}
    \State Initialize augmented buffer $\mathcal{Z}_{\text{aug},c} = \emptyset$.
    \For{each sample pair $(\mathbf{z}_i, \mathbf{z}_j) \in (Z_{t,c}, Z_{t,c})$ (in the same class $c$)}
        % \State Sample $\mathbf{z}_j \sim \mathcal{Z}_{t,c}$ (intra-class).
        \State Sample $\gamma \sim \text{Beta}(\alpha_c, \alpha_c)$.
        \State Synthesize spherical mixup sample \Comment{Using Eq.\ref{eq:spherical_mixup}}
        \State $\quad \tilde{\mathbf{z}} = \frac{\gamma \mathbf{z}_i + (1-\gamma)\mathbf{z}_j}{\| \gamma \mathbf{z}_i + (1-\gamma)\mathbf{z}_j \|_2}$
        \State $\mathcal{Z}_{\text{aug},c} \leftarrow \mathcal{Z}_{\text{aug},c} \cup \{ \tilde{\mathbf{z}} \}$.
    \EndFor
    \State $\mathcal{Z}_{\text{aug}} \leftarrow \mathcal{Z}_{\text{aug}} \cup \mathcal{Z}_{\text{aug},c}$.
    \EndFor
    % \State \textbf{// 3. Recursive Gra\Delta Update (Sherman-Morrison)}
    % \State Define effective batch $Z_{\text{total}} = Z_t \cup \mathcal{Z}_{\text{aug}}$.
    % \State Update Inverse Gram Matrix $P_t = (G_t + \tau I)^{-1}$ recursively:
    % \State $\quad P_t \leftarrow \text{RecursiveUpdate}(P_{t-1}, Z_{\text{total}})$ \Comment{Using Eq.\ref{woody}}
    % \State $\mathcal{Z}_{\text{total}} \leftarrow \mathcal{Z}_{\text{t}}  \mathcal{Z}_{\text{aug},c}$.
    \State \textbf{// 3. Block-wise Update (Dual Stream)}
    
    \State Update $Q_t \leftarrow Q_{t-1} + Z_{\text{t}}^T Y_{\text{t}} +  \beta Z_{\text{aug}}^T Y_{\text{aug}}$.
    \State Update $G_t \leftarrow G_{t-1} + Z_{\text{t}}^T Z_{\text{t}} + \beta Z_{\text{aug}}^T Z_{\text{aug}}$. \Comment{Using Eq.\ref{unify_G}}
    
    \State \textbf{// 4. Analytic Solution}
    \State Compute weights: $W_t = (G_t + \tau I)^{-1} Q_t$.
\EndFor
\State \Return $W_{final} =  W_T$
\end{algorithmic}
% \vspace{-5mm}
\end{algorithm}
Here, the mixing coefficient $\gamma$ is drawn from the adaptive Beta distribution defined in Eq.\ref{alpha_mix}, the mixing pairs $(x_i, x_j)$ are randomly selected from the available real samples of the same class, and the normalization term ensures that $\tilde{z}$ remains on the unit hyper-sphere, avoiding the norm shrinkage phenomenon after mixing.
% \textcolor{red}{adhering to the manifold constraint.}

Crucially, to ensure both fidelity to the true distribution and structural robustness, we utilize a \textit{Dual-Stream Update} strategy. The Gram matrix is updated using the union of both real and augmented data streams. The update rule becomes:
\begin{equation}
    \tilde{G}_t = G_{t-1} + \underbrace{\sum_{z \in \mathcal{Z}_{real}} z z^T}_{\text{Fidelity Term}} + \underbrace{\beta \sum_{\tilde{z} \in \mathcal{Z}_{aug}} \tilde{z} \tilde{z}^T}_{\text{Rectification Term}}
    \label{unify_G}
\end{equation}
where $\beta$ is a balancing coefficient. Particularly,
\begin{itemize}
    \item \textit{Fidelity Term:} ensures the model learns the precise feature statistics from the real data.
    % \item \textit{Rectification Term:} injects the necessary variance into the null-space of the tail class manifold, preventing the spectral collapse described in Theorem \ref{thm:stable_rank}.
     \item \textit{Rectification Term} injects variance along the geodesic arcs into the null-space of the tail class manifold, preventing the spectral collapse described in Theorem \ref{thm:stable_rank}.
\end{itemize}
% By combining both, GSR preserves the original signal while artificially "thickening" the manifold to ensure numerical stability during the inversion of $G$.
By combining both, GSR preserves the original semantic identity while artificially "thickening" the manifold to ensure numerical stability during the inversion of RLS-based ACL methods in long-tailed settings. The complete procedure of our technical method is summarized in Algorithm \ref{alg:gsr}.

\vspace{-5mm}
\section{Experiments}\label{sec:exp}
\vspace{-4mm}
\subsection{Experimental Setup}

\noindent \textbf{Datasets and Protocols.} We evaluate GSR on four benchmark datasets widely used in Class-Incremental Learning (CIL), but in Long-Tailed setting: {Split-CIFAR-100}, {Split-ImageNet-R}, {Split-Tiny-ImageNet}, and {Split-CUB-200}. Further details regarding the long-tailed datasets can be found in Appendix \ref{exp_detail}.
% {\color{red} I think we should give more detail about the long tail setup here? (e.g. how we sampling long tail class, why it suitable)}

\noindent \textbf{Baselines.} We compare GSR against a wide range of representative ACL baselines, categorized into two groups: \textit{(i) RLS-based methods}, including RanPAC \cite{mcdonnell2023ranpac}, AnaCP \cite{DBLP:journals/corr/abs-2511-13880}, GACL \cite{GACL_Zhuang_NeurIPS2024}, and AIR \cite{AIR_Fang_arXiv2024}; and \textit{(ii) Covariance and Prototype-based methods}, including FECAM \cite{goswami2023fecam}, SLDA \cite{Hayes_2020_CVPR_Workshops}, KLDA \cite{momeni2025continual}, and SimpleCIL \cite{DBLP:journals/corr/abs-2210-04428}.

\noindent \textbf{Evaluation Metrics.} We report the performance using: Last Accuracy ($A_{last}$) and  Average Accuracy ($A_{avg}$).
% \begin{itemize}
%     \item \textbf{Last Accuracy ($A_{last}$):} The overall test accuracy after learning the final task $T$.
%     \item \textbf{Average Accuracy ($A_{avg}$):} The mean of test accuracies calculated after each task step, reflecting the stability of the learning curve.
%     \item \textbf{Forgetting ($F$):} The average performance drop of old classes after learning new tasks.
% \end{itemize}

All experiments use DINO-v2 or MoCo-v3 as pretrained backbones and are averaged over 3 random seeds to ensure statistical significance. 
\noindent See further details in Appendix \ref{exp_detail}.
\vspace{-3mm}
\subsection{Experimental results}
\subsubsection{Performance analysis on different backbones}
% \paragraph{\color{red} a. The "Strong Backbone" Paradox and the Dominance of SimpleCIL: I think this header is too fancy like a marketing style :D, I dont like it} ---> đmm, hihi
\paragraph{a. Performance Reversal: Simple Baselines Outperform RLS-based "SOTA methods" in Long-tailed Settings}
Observing Table \ref{table:main_results_dino} and Figure \ref{fig:baselines_imgr}, we identify an intriguing phenomenon: in a long-tailed setting with a robust backbone like DINO-v2, the simple method SimpleCIL achieves remarkably high performance (80.42\% on CIFAR-100), far surpassing RLS-based ACL methods such as GACL (48.78\%) and RanPAC (49.30\%).

Typically, in balanced settings, RLS leverages the Gram matrix to utilize second-order statistics for feature decorrelation and minimizes the global Mean Squared Error (MSE), yielding precise decision boundaries. However, this strength becomes a liability in long-tailed scenarios. In the high-dimensional feature space of DINO-v2, combined with data scarcity, the covariance estimation becomes ill-conditioned. This leads to severe overfitting, causing performance to degrade significantly compared to the simple baseline.
% Observing Table \ref{table:main_results_dino} and Figure \ref{fig:baselines_imgr}, we identify an intriguing phenomenon: with a robust backbone like DinoV2, the simple method SimpleCIL achieves remarkably high performance (80.42\% on CIFAR-100), far surpassing complex Analytic Learning methods such as GACL (48.78\%) and RanPAC (49.30\%). The reason is that GACL and RanPAC rely heavily on estimating the inverse covariance matrix. In the high-dimensional feature space of DinoV2, combined with limited data availability (typical of the continual learning setting), this estimation becomes ill-conditioned. This leads to severe overfitting, causing performance to fall below average.

\paragraph{b. Effectiveness of GSR on DINO-v2 backbone:}
Our method acts as a powerful regularizer, effectively addressing the critical drawback mentioned above:
% \begin{itemize}
%     \item \textit{Revitalizing Collapsed Algorithms:} When applying our method (+Ours), the performance of GACL and RanPAC skyrockets (e.g., GACL increases from 48.78\% $\rightarrow$ 65.81\% on CIFAR-100). This demonstrates that our method stabilizes parameter estimation within high-dimensional spaces.
%     \item \textit{Establishing a New SOTA with AnaCP:} More importantly, although SimpleCIL is a formidable competitor, the combination of AnaCP and our method (AnaCP + Ours) achieves 82.51\%, officially surpassing SimpleCIL to become the State-of-the-Art (SOTA). This confirms that to fully exploit the power of DinoV2, a strong Analytic algorithm (AnaCP) supported by high-quality augmented data from our method is essential.
% \end{itemize}
\begin{itemize}
    \item \textit{Revitalizing RLS-based Algorithms:} Integrating our method (+Ours) yields substantial performance gains for GACL and RanPAC (e.g., GACL improves from 48.78\% $\rightarrow$ 65.81\% on Split-CIFAR-100). This demonstrates that our approach effectively stabilizes parameter estimation in high-dimensional spaces, mitigating the overfitting issue observed in the original baselines.
    
    \item \textit{Achieving State-of-the-Art with AnaCP:} While SimpleCIL serves as a highly competitive baseline, the combination of AnaCP and our method (AnaCP + Ours) reaches 82.51\%, surpassing SimpleCIL to establish a new State-of-the-Art. This confirms that to fully exploit the power of DinoV2, a strong Analytic algorithm (AnaCP) supported by high-quality augmented data from our method is essential.
    % This confirms that fully exploiting the potential of DinoV2 requires a robust Analytic algorithm (AnaCP) complemented by the high-quality data augmentation provided by our method.
    \item \textit{Our advantage over long-tail-aware RLS methods:} Among the baselines, AIR stands out as the only RLS-based ACL method explicitly designed for long-tailed settings by introducing inverse-frequency re-weighting into the Gram matrix. However, we argue that such adjustments remain fundamentally scalar modulations. They merely rescale the contribution of existing features without enriching the underlying feature geometry, failing to recover missing directions in the representation space. Consequently, as shown in the tables, AIR consistently underperforms compared to our method, which actively rectifies the geometric structure.
\end{itemize}
\paragraph{c. Consistency on MoCo-v3:}
The improvement trend remains consistent on MoCo-v3 (Table \ref{table:main_results_moco}). AnaCP + Ours continues to lead with a significant margin over the baseline (increasing from 57.38\% to 66.01\% on Split-CIFAR-100). This highlights the generalization capability of the proposed method across different feature extractor architectures.

% Tables \ref{table:main_results_dino}, \ref{table:main_results_moco}

\begin{table}[!ht]
% \vspace{-3mm}
	\centering
    \scriptsize
    \caption{\textbf{Overall performance comparison, using DINO-v2 as PTM.} The best result is shown in \textbf{bold}, and the second-best is \underline{underlined}.}
    \vspace{-1mm}
	\label{table:main_results_dino}
    \smallskip
      \renewcommand\arraystretch{1.2}
    \small{
	\resizebox{\textwidth}{!}{
	\begin{tabular}{l cc cc cc cc}
	   \hline
        
        \multirow{2}{*}{Method} & \multicolumn{2}{c}{Split CIFAR-100} & \multicolumn{2}{c}{Split ImageNet-R} & \multicolumn{2}{c}{Split-Tiny-Imagenet} & \multicolumn{2}{c}{Split CUB-200} \\
        \cmidrule(lr){2-3} \cmidrule(lr){4-5} \cmidrule(lr){6-7} \cmidrule(lr){8-9}
        & {$A_{last}$} ($\uparrow$)  & {$A_{avg}$} ($\uparrow$) & {$A_{last}$} ($\uparrow$)  & {$A_{avg}$} ($\uparrow$) & {$A_{last}$} ($\uparrow$)  & {$A_{avg}$} ($\uparrow$) & {$A_{last}$} ($\uparrow$)  & {$A_{avg}$} ($\uparrow$) \\
        \hline
    %    \multirow{6}*{\tabincell{c}{Sup-21K}} 
       % Join Linear probe \\
       % Join fine-tuning \\
       SLDA & $72.28_{\pm 0.67}$ &$76.06_{\pm 1.30}$ & $61.65_{\pm 0.32}$ &$64.60_{\pm 1.08}$ & $71.19_{\pm 0.43}$ &$73.90_{\pm 1.49}$ & $81.11_{\pm 0.82}$ &$86.21_{\pm 0.80}$\\ 
       FeCAM & $33.35_{\pm 0.48}$ &$37.97_{\pm 1.71}$  & $36.41_{\pm 1.29}$ &$42.89_{\pm 2.77}$ & $35.28_{\pm 0.27}$ &$39.68_{\pm 1.07}$ & $70.55_{\pm 0.54}$ &$79.31_{\pm 0.90}$\\
       SimpleCIL & $\underline{80.42}_{\pm 0.33}$ &$\underline{87.21}_{\pm 0.40}$ & $66.20_{\pm 1.28}$ &$\underline{73.73}_{\pm 0.60} $ & $\underline{78.21}_{\pm 0.13}$ &$\underline{83.60}_{\pm 1.03}$ & $82.07_{\pm 0.29}$ &$87.84_{\pm 0.83}$\\
       KLDA & $49.30_{\pm 1.37}$ &$55.38_{\pm 2.72}$ & $44.57_{\pm 1.74}$ &$51.26_{\pm 1.97}$ & $52.31_{\pm 0.78}$ &$54.84_{\pm 1.32}$ & $78.16_{\pm 0.98}$ &$85.98_{\pm 1.26}$\\
       AIR & $64.87_{\pm 0.93}$ & $69.92_{\pm 2.48}$ & $54.51_{\pm 1.35}$ & $58.87_{\pm 1.29}$ & $69.19_{\pm 0.45}$ & $69.89_{\pm 1.41}$ & $80.74_{\pm 1.04}$ & $87.39_{\pm 1.38}$ \\
       \midrule
       \midrule
       
       GACL & $48.78_{\pm 1.32}$ &$58.84_{\pm 2.97}$ & $47.84_{\pm 2.18}$ &$51.53_{\pm 1.34}$ & $42.98_{\pm 0.37}$ &$53.65_{\pm 1.32}$ & $80.73_{\pm 1.07}$ &$87.20_{\pm 1.17}$\\
       \rowcolor{gray!20} ${\quad +Ours}$ & $65.51_{\pm 1.25}$ & $71.79_{\pm 2.69}$ & $61.58_{\pm 0.58}$ &$65.90_{\pm 0.72}$ & $68.88_{\pm 0.29}$ &$70.70_{\pm 1.27}$ & $82.46_{\pm 0.58}$ &$87.98_{\pm 1.00}$\\
       % GACL$_{+Ours}$ & $65.81_{\pm 1.25}$ & & $61.58_{\pm 0.58}$ & & $68.88_{\pm 0.29}$ & & $82.46_{\pm 0.58}$\\
       \midrule
       RanPAC & $49.30_{\pm 1.27}$ & $59.03_{\pm 1.03}$ & $48.28_{\pm 1.67}$  & $52.94_{\pm 2.97}$ & $43.62_{\pm 0.36}$ & $54.84_{\pm 1.23}$ & $81.11_{\pm 0.89}$ & $87.52_{\pm 1.25}$\\
       \rowcolor{gray!20} ${\quad +Ours}$ & $66.68_{\pm 1.25}$ & $71.98_{\pm 2.65}$ & $62.08_{\pm 0.62}$ & $66.65_{\pm 1.07}$ & $69.57_{\pm 0.29}$ & $71.45_{\pm 1.25}$ & $83.03_{\pm 0.62}$ & $88.64_{\pm 1.02}$\\
       
       \midrule
       AnaCP & $72.90_{\pm 0.01}$ &$78.54_{\pm 1.72}$ & $\underline{67.15}_{\pm 0.19}$ &$71.86_{\pm 0.11}$ &  $72.97_{\pm 0.26}$ &$76.30_{\pm 1.23}$ & $\underline{83.56}_{\pm 0.66}$ &$\underline{89.07}_{\pm 1.09}$\\

        \rowcolor{gray!20} ${\quad +Ours}$  & $\textbf{82.51}_{\pm 0.01}$ &$\textbf{88.29}_{\pm 0.73}$ & $\textbf{72.13}_{\pm 0.84}$ &$\textbf{77.96}_{\pm 0.47}$ & $\textbf{78.81}_{\pm 0.16}$ &$\textbf{83.71}_{\pm 0.96}$ & $\textbf{85.85}_{\pm 0.43}$ & $\textbf{90.53}_{\pm 0.85}$ \\
       
       \midrule
	\end{tabular}
    }}
    \vspace{-1mm}
\end{table}

\begin{table}[!ht]
	\centering
    \caption{\textbf{Overall performance comparison, using MoCo-v3 as PTM.} The best result is shown in \textbf{bold}, and
the second-best is \underline{underlined}.}
	\label{table:main_results_moco}
    \vspace{-0.1cm}
    
    % \vspace{-0.32cm}
    %\cite{smith2022coda} uses self-supervised pretraining on ImageNet-21K with supervised fine-tuning on ImageNet-1K, and different data splits for ImageNet-R. Here we reproduce its results with the same settings.
    
      %\vspace{-0.1cm}
	\smallskip
      \renewcommand\arraystretch{1.2}
    \small{
	\resizebox{\textwidth}{!}{ 
	\begin{tabular}{l cc cc cc cc}
	 \hline
        
        \multirow{2}{*}{Method} & \multicolumn{2}{c}{Split CIFAR-100} & \multicolumn{2}{c}{Split ImageNet-R} & \multicolumn{2}{c}{Split-Tiny-Imagenet} & \multicolumn{2}{c}{Split CUB-200} \\
        \cmidrule(lr){2-3} \cmidrule(lr){4-5} \cmidrule(lr){6-7} \cmidrule(lr){8-9}
        & {$A_{last}$} ($\uparrow$)  & {$A_{avg}$} ($\uparrow$) & {$A_{last}$} ($\uparrow$)  & {$A_{avg}$} ($\uparrow$) & {$A_{last}$} ($\uparrow$)  & {$A_{avg}$} ($\uparrow$) & {$A_{last}$} ($\uparrow$)  & {$A_{avg}$} ($\uparrow$) \\
        \hline
    %    \multirow{6}*{\tabincell{c}{Sup-21K}} 
       % Join Linear probe \\
       % Join fine-tuning \\
       SLDA & $52.67_{\pm 0.59}$ &$59.38_{\pm 2.64}$ & $32.67_{\pm 1.25}$ &$37.22_{\pm 3.05}$ & $57.50_{\pm 0.21}$ &$62.65_{\pm 1.47}$ & $41.75_{\pm 0.53}$ &$50.78_{\pm 1.37}$\\ 
       FeCAM & $27.64_{\pm 0.55}$ &$29.28_{\pm 0.03}$ & $16.04_{\pm 1.38}$ &$20.06_{\pm 2.04}$ & $27.00_{\pm 0.22}$ &$29.34_{\pm 0.17}$ & $24.19_{\pm 1.11}$ &$30.43_{\pm 1.68}$\\
       SimpleCIL & $55.32_{\pm 0.89}$ &$66.25_{\pm 1.77}$ &  $25.48_{\pm 0.62}$ &$32.32_{\pm 1.89}$ & $58.65_{\pm 0.75}$ &$66.57_{\pm 1.03}$ & $34.48_{\pm 0.81}$ &$44.78_{\pm 1.63}$\\
       KLDA & $38.11_{\pm 1.24}$ &$44.84_{\pm 2.95}$ & $25.11_{\pm 1.93}$ &$30.29_{\pm 2.94}$ & $41.70_{\pm 0.47}$ &$47.12_{\pm 0.90}$ & $34.42_{\pm 1.07}$ &$44.64_{\pm 0.88}$\\
       AIR & $\underline{65.13}_{\pm 0.21}$ & $\underline{74.37}_{\pm 1.12}$ & $36.71_{\pm 0.47}$ & $\underline{44.80}_{\pm 2.24}$ & $\underline{63.75}_{\pm 0.22}$ & $\underline{72.59}_{\pm 0.68}$ & $49.37_{\pm 0.45}$ & $59.57_{\pm 1.54}$ \\
       \midrule
       \midrule
       GACL & $30.78_{\pm 0.51}$ &$41.48_{\pm 2.35}$ & $24.28_{\pm 1.70}$ &$30.76_{\pm 2.64}$ & $30.01_{\pm 0.52}$ &$39.69_{\pm0.44}$ & $34.48_{\pm 1.27}$ &$45.77_{\pm 1.05}$\\
       \rowcolor{gray!20} ${\quad +Ours}$ & $52.40_{\pm 1.54}$ &$60.67_{\pm 2.69}$ & $34.86_{\pm 1.32}$ &$39.49_{\pm 2.75}$ & $58.04_{\pm 0.19}$ &$63.93_{\pm 1.28}$ & $48.33_{\pm 0.80}$&$56.72_{\pm 0.97}$\\
       \midrule
       RanPAC & $31.23_{\pm 0.57}$ & $41.84_{\pm 2.63}$ & $25.07_{\pm 1.59}$ & $32.24_{\pm 2.68}$ & $30.62_{\pm 0.59}$ & $40.95_{\pm 0.67}$ & $35.17_{\pm 1.29}$& $46.44_{\pm 1.23}$\\
       \rowcolor{gray!20} ${\quad +Ours}$ & $53.28_{\pm 1.55}$ & $61.32_{\pm 2.55}$ & $35.12_{\pm 1.32}$ & $40.94_{\pm 2.77}$ &  $58.82_{\pm 0.21}$ & $64.84_{\pm 1.17}$ & $49.02_{\pm 0.80}$& $57.04_{\pm .83}$\\
       \midrule
       AnaCP & $57.38_{\pm 1.14}$ &$63.30_{\pm 2.69}$ & $\underline{36.84}_{\pm 1.46}$  &$41.95_{\pm 2.78}$ & $62.02_{\pm 0.34}$ &$66.71_{\pm 1.21}$ & $\underline{52.11}_{\pm 0.86}$&$\underline{60.27}_{\pm 1.52}$\\
        \rowcolor{gray!20} ${\quad +Ours}$  & $\textbf{66.01}_{\pm 0.26}$ &$\textbf{76.34}_{\pm 1.29}$ & $\textbf{37.21}_{\pm 0.65}$ &$\textbf{46.05}_{\pm 1.19}$ & $\textbf{64.37}_{\pm 0.23}$ &$\textbf{73.27}_{\pm 0.52}$ & $\textbf{56.05}_{\pm 0.91}$&$\textbf{65.52}_{\pm 1.32}$\\
       
       \midrule
       
       % \hline
       
	\end{tabular}
	} }
    \vspace{-3mm}
\end{table}

% \begin{table}[ht]
%     % \vspace{-0.2cm}
%     \centering
%     \caption{Ablation study.} 
%      % \vspace{-0.3cm}
% 	\smallskip
%       \renewcommand\arraystretch{1.}
%       % \setlength\tabcolsep{2.5pt}
%     \small{
% 	\resizebox{0.85\textwidth}{!}{ 
%         \centering
% 	\begin{tabular}{cccccccc}
% 	 \hline
%       \multirow{2}{*}{Meta-MLP} & \multicolumn{3}{c}{Feature augmentation} & \\
%         \hline

%        \hline
% 	\end{tabular}
% 	} 
% }
%         % \vspace{-2mm}
        
% 	\label{table:backbone}
% 	% \vspace{-0.1cm}
% \end{table}

\subsection{Ablation Study}

\subsubsection{Impact of Geometric Rectification Strategies.}
To validate the effectiveness of our method, we incrementally evaluate the components of GSR on the Split-CUB-200 dataset.
Table~\ref{tab:component_ablation} reveals a critical insight: applying standard \textit{Linear Mixup} in the high-dimensional feature space of DINO-v2 actually {degrades} performance ($80.73\% \rightarrow 77.85\%$). This confirms that linear interpolation within a hyperspherical embedding space can create "weak" features that drift off the manifold (norm shrinkage), confusing the classifier.
In contrast, our GSR respects the geometry, boosting accuracy to $82.01\%$ and $47.45\%$. Finally, incorporating the {Adaptive Mixing} strategy ($\alpha_c$) further refines the spectral density of tail classes, achieving the best performances of ${82.46\%}$ and ${48.33\%}$. 

% \subsubsection{Component analysis}
% Table comparing: No Augment vs. Gaussian Sampling vs. Linear Mixup vs. Spherical Mixup (GSR).
% Mục đích: Chứng minh Spherical Mixup là cách duy nhất đúng về mặt hình học.
% Table \ref{tab:component_ablation}
\begin{table}[h]
\centering
\vspace{-3mm}
\caption{\textbf{Ablation Study on Component Effectiveness.} We incrementally add components to the baseline (Standard Ridge) to validate the contribution of each module in GSR. Evaluated on Split-CUB-200, using DINO-v2 as the pretrained backbone.}
\vspace{-1mm}
\label{tab:component_ablation}
\renewcommand\arraystretch{1.2}
    \small{
\resizebox{0.7\linewidth}{!}{
\begin{tabular}{l|ccc|cc}
\toprule
\multirow{2}{*}{\textbf{Method Variant}} & \multicolumn{3}{c|}{\textbf{Components}} & \multicolumn{2}{c}{\textbf{Last Acc (\%)}} \\
 & \textbf{Mixup} & \textbf{Spherical} & \textbf{Adaptive ($\alpha_c$)} & \textbf{DINO-v2} & \textbf{MoCo-v3} \\
\midrule
Standard Ridge (GACL) & - & - & - & $80.73_{\pm 1.07}$ & $34.48_{\pm 1.27}$ \\
+ Linear Mixup & \checkmark & - & - & $77.85_{\pm 1.63}$   &  $45.90_{\pm 0.83}$ \\
+ Spherical Mixup & \checkmark & \checkmark & - & $\underline{82.01}_{\pm 0.47}$ & $\underline{47.45}_{\pm 0.67}$ \\
\rowcolor{gray!20} +\textbf{ Our full GSR} & \checkmark & \checkmark & \checkmark & {$\textbf{82.46}_{\pm 0.58}$} & $\textbf{48.33}_{\pm 0.80}$ \\
\bottomrule
\end{tabular}}
}
\vspace{-2mm}
\end{table} 

\subsubsection{Sensitivity to Imbalance Ratio ($\rho$)}
We investigate the robustness of GSR under varying degrees of data imbalance on Split-CIFAR-100. As shown in Table~\ref{tab:imbalance_sensitivity}, as the imbalance ratio $\rho$ increases from 10 (moderate) to 150 (extreme), the baseline GACL suffers a catastrophic drop of over $42\%$ ($81.23\% \rightarrow 38.60\%$). However, GSR demonstrates remarkable resilience. The performance gap between our method and the baseline widens significantly as the task becomes harder: from a $+3.99\%$ gain at $\rho=10$ to a massive ${+16.9\%}$ gain at $\rho=150$. This proves that GSR is not merely a constant regularizer, but an active \textit{spectral rectifier} that becomes increasingly critical as the tail lengthens.
% Table \ref{tab:imbalance_sensitivity}

\begin{table}[h]
\centering
\vspace{-4mm}
\caption{\textbf{Robustness to Data Imbalance. } Comparison of Last Accuracy (\%) under different imbalance ratios ($\rho$), on Split-CIFAR-100, using DINO-v2. Higher $\rho$ indicates a more severe imbalance. GACL+GSR demonstrates superior robustness compared to the base method GACL. }
\label{tab:imbalance_sensitivity}
\resizebox{0.6\linewidth}{!}{
\begin{tabular}{l|ccccc}
\toprule
\multirow{2}{*}{\textbf{Method}} & \multicolumn{4}{c}{\textbf{Imbalance Ratio ($\rho$)}} \\
\cmidrule(lr){2-6}
 & $\rho=10$ & $\rho=20$ & $\rho=50$ & $\rho=100$ & $\rho=150$ \\
\midrule
GACL & 81.23 & 74.19 & 59.99 & $48.78$ & 38.60 \\
\rowcolor{gray!20} {GACL +  GSR} & 85.22 & 82.58 & 75.16 & $65.51$ & 55.50\\
\midrule
\textit{Improvement ($\Delta$)} & \textit{+3.99} & \textit{+8.39} & \textit{+15.17} & \textit{+16.73} & \textit{{+16.9}} \\
\bottomrule
\end{tabular}
}
\end{table}
\vspace{-8mm}
\subsubsection{Numerical Stability Analysis.}
To empirically verify Theorem \ref{thm:stable_rank}, we monitor the \textit{Stable Rank} ($\text{sr}(G + \tau I)$) of the regularized Gram matrix throughout the incremental learning process.
A stable rank close to 1.0 indicates severe spectral collapse, where the matrix is numerically indistinguishable from a rank-1 matrix (dominated by a single direction).
Figure~\ref{fig:stable_rank} shows that the baseline GACL consistently hovers near the collapse point ($\text{sr} \approx 1.02$). In contrast, GSR effectively "thickens" the spectrum, maintaining a healthy stable rank ($\text{sr} \approx 2.80$) across all tasks. This confirms that our method successfully injects variance into the null-space of tail classes, ensuring a well-conditioned inversion.
% Table \ref{tab:condition_number}

\begin{figure}[ht]
    \vspace{-5mm}
    \centering
    \includegraphics[width=0.7\linewidth,trim=0cm 0cm 0cm 0cm, clip
    ]{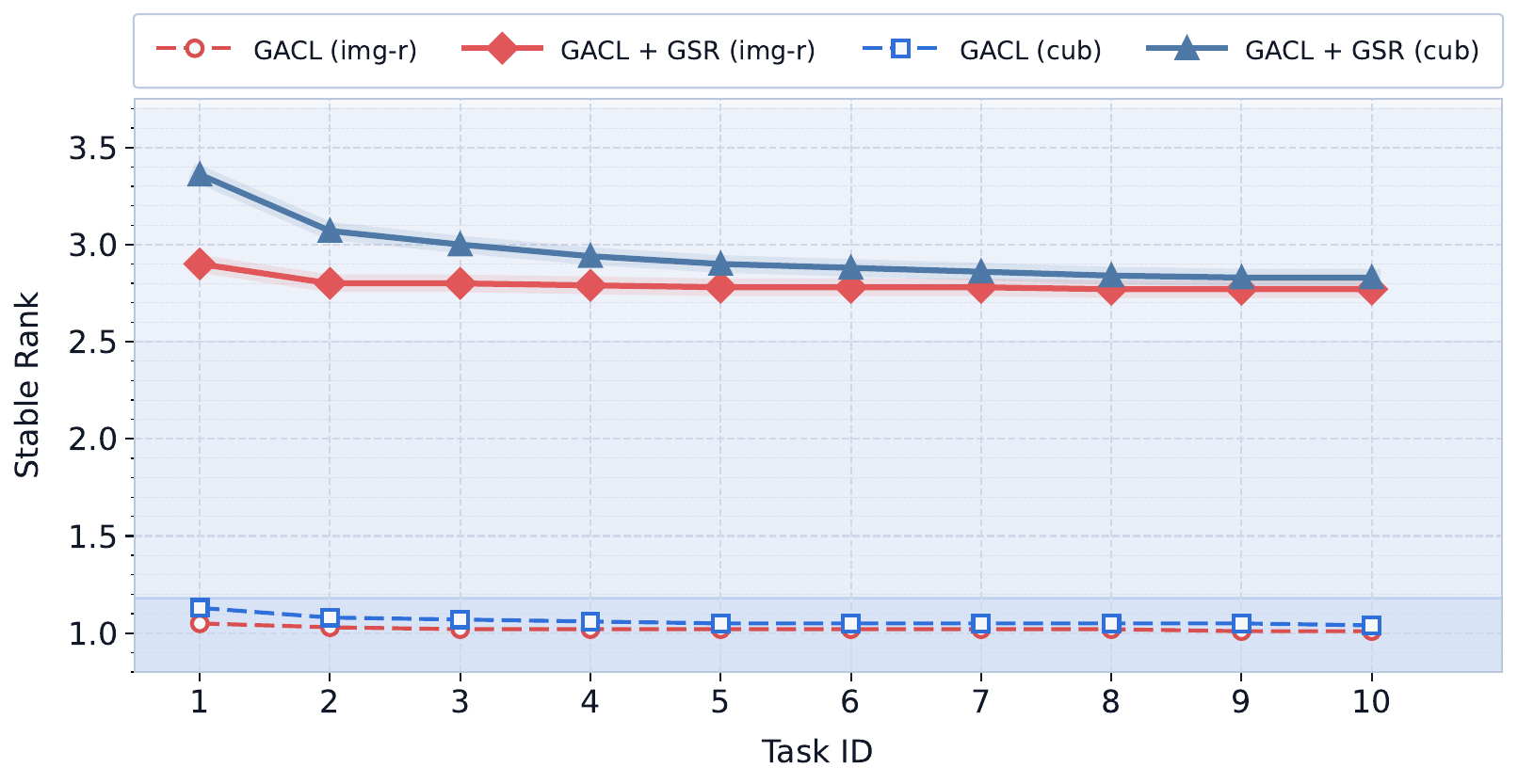}
    \vspace{-2mm}
    \caption{\textbf{Numerical Stability Analysis.} We report the \textbf{\textit{Stable Rank}} of the matrix $(G + \tau I)$ at different incremental stages. While GACL (dashed lines) suffers from severe spectral collapse with a stable rank near 1.0, our proposed GSR (solid lines) consistently helps maintain a higher stable rank across all tasks, indicating a well-conditioned feature space and effective mitigation of collapse. \textcolor{plotred}{\textit{"img-r"}} stands for \textcolor{plotred}{\textit{Split-Imagenet-R}}, and \textcolor{plotblue}{\textit{"cub"}} stands for \textcolor{plotblue}{\textit{Split-CUB-200}}.}
    % Lower is better (more stable). The Baseline suffers from spectral collapse (exploding $\kappa$), while GSR maintains a well-conditioned spectrum.
    \label{fig:stable_rank}
    \vspace{-5mm}
\end{figure}

% \begin{table}[h]
% \centering
% \caption{\textbf{Numerical Stability Analysis.} We report the \textbf{\textit{Stable Rank}} of the $(G + \tau I)$ at different incremental stages. Lower is better (more stable). The Baseline suffers from spectral collapse (exploding $\kappa$), while GSR maintains a well-conditioned spectrum.}
% \label{tab:condition_number}
% \renewcommand\arraystretch{1.2}
%     \small{
% \resizebox{0.8\linewidth}{!}{
% \begin{tabular}{l|cccccccccc}
% \toprule
% \multirow{2}{*}{\textbf{Method}} & \multicolumn{10}{c}{{Task ID}} \\
%  & {T=1} & {T=2} & {T=3} & {T=4} & {T=5} & {T=6} & {T=7} & {T=8} & {T=9} & {T=10} \\
% \midrule
% GACL (img-r) & 1.05 & 1.03 & 1.02 & 1.02 & 1.02 & 1.02 & 1.02 & 1.02 & 1.01 & 1.01 \\
% \rowcolor{gray!20} {GACL + GSR (img-r)} & 2.90 & 2.80 & 2.80 & 2.79 & 2.78 & 2.78 & 2.78 & 2.77 & 2.77 & 2.77 \\
% GACL (cub) & 1.13 & 1.08 & 1.07 & 1.06 & 1.05 & 1.05 & 1.05 & 1.05 & 1.05 & 1.04 \\
% \rowcolor{gray!20} GACL + SNR (cub)& 3.36 & 3.07 & 3.00 & 2.94 & 2.90 & 2.88 & 2.86 & 2.84 & 2.83 & 2.83\\
% \bottomrule
% \end{tabular}}
% }
% \end{table}
\subsubsection{Performance Breakdown by Class Frequency.}
% Table comparing: Fixed mixing rate (\mu) vs. Adaptive mixing rate (\alpha_c).
% Mục đích: Chứng minh Tail class cần được đối xử khác Head class.
Analysis of Class Frequency Impact. To investigate the source of the performance gains, Table \ref{tab:adaptive_efficacy} breaks down the accuracy for the most frequent (Head) and least frequent (Tail) classes.

First, GSR demonstrates a remarkable ability to resurrect tail classes. In standard GACL, tail performance is extremely low (e.g., 12.50\% on Split-CIFAR-100 and 13.33\% on Split-Tiny-ImageNet), indicating the model mostly ignores these minority classes. However, integrating GSR yields massive improvements, boosting tail accuracy by 3$\times$ to 4$\times$ (reaching 38.00\% and 52.70\%, respectively). This confirms that our GSR effectively prevents feature collapse in sparse regions.

Second, regarding the head classes, we observe a negligible trade-off. While there is a slight dip in accuracy for CIFAR-100 and Tiny-ImageNet (approx. 1\%), this is a typical phenomenon in long-tailed learning due to the alleviation of majority-class bias. Crucially, the massive gains in the tail far outweigh this minor drop, resulting in a significantly higher overall accuracy.

Interestingly, on ImageNet-R, GSR improves both head (+5.62\%) and tail (+26.29\%) performance. 
This indicates that GSR does not merely shift the decision boundary bias. Instead, it rectifies the feature distribution to be more representative, enabling the analytic solver to generalize better across both head and tail classes.
% Since ImageNet-R contains out-of-distribution samples, 
% This suggests that GSR does not merely re-balance the decision boundary but can even learns more robust and generalizable feature representations via the rectified geometric manifold.
% Table \ref{tab:adaptive_efficacy}
% \begin{table}[h]
% \centering
% \caption{\textbf{Breakdown of accuracy by class frequency groups (Head, Tail).}  }
% \label{tab:adaptive_efficacy}
% \resizebox{0.8\linewidth}{!}{
% \begin{tabular}{l|cc|c}
% \toprule
% \multirow{2}{*}{\textbf{Method}} & \multicolumn{2}{c|}{\textbf{Group Accuracy (\%)}} & \multirow{2}{*}{\textbf{Overall (\%)}} \\
% \cmidrule(lr){2-3}
%  & \textbf{Head (5 classes)} &  \textbf{Tail (5 classes)} & \\
% \midrule
% GACL (cifar100) & 95.38 & 12.50 & 48.78 \\
% \rowcolor{gray!20} GACL + GSR (cifar100) & 95.25  & 38.00 & 65.81\\
% \midrule 
% GACL (imagenet-r) & 83.38 & 21.09 & 47.84 \\
% \rowcolor{gray!20} GACL + GSR (imagenet-r) & 89.00  & 47.38 & 61.58\\
% \midrule 
% GACL (tiny-imagenet) & 95.00 & 13.33 & 42.98 \\
% \rowcolor{gray!20} GACL + GSR (tiny-imagenet) & 94.00  & 52.70 & 68.88 \\
% \bottomrule
% \end{tabular}
% }
% \end{table}

\begin{table}[ht]
\centering
\vspace{-3mm}
\caption{\textbf{Breakdown of accuracy by class frequency groups (Head, Tail).} DINO-v2 is used as the pretrained backbone. }
\vspace{-1mm}
\label{tab:adaptive_efficacy}
\resizebox{0.82\linewidth}{!}{
\begin{tabular}{llccc}
\toprule
\multirow{2}{*}{\textbf{Dataset}} & \multirow{2}{*}{\textbf{Method}} & \multicolumn{2}{c}{\textbf{Group Accuracy (\%)}} & \multirow{2}{*}{\textbf{Overall (\%)}} \\ \cmidrule(lr){3-4}
 &  & \textbf{Head (5 classes)} & \textbf{Tail (5 classes)} &  \\ \midrule
\multirow{2}{*}{CIFAR100} & GACL & 95.38 & 12.50 & 48.78 \\
 & \cellcolor{gray!15}GACL + GSR & \cellcolor{gray!15}95.25 & \cellcolor{gray!15}38.00 & \cellcolor{gray!15}65.81 \\ \midrule
\multirow{2}{*}{Imagenet-R} & GACL & 83.38 & 21.09 & 47.84 \\
 & \cellcolor{gray!15}GACL + GSR & \cellcolor{gray!15}89.00 & \cellcolor{gray!15}47.38 & \cellcolor{gray!15}61.58 \\ \midrule
\multirow{2}{*}{Tiny-Imagenet} & GACL & 95.00 & 13.33 & 42.98 \\
 & \cellcolor{gray!15}GACL + GSR & \cellcolor{gray!15}94.00 & \cellcolor{gray!15}52.70 & \cellcolor{gray!15}68.88 \\ \bottomrule
\end{tabular}}
\vspace{-8mm}
\end{table}

% \subsubsection{Spectral Health Check (Verification of Theorem 1 - reg_matrix) .}
% % Figure: Condition Number \kappa(G) over tasks.
% % Mục đích: Chứng minh lý thuyết đúng với thực nghiệm.

% Figures \ref{fig:spectral_check},

% \begin{figure}[t]
%     \centering
%     \includegraphics[width=0.9\linewidth]{spectral_health_check.pdf} 
%     \caption{\textbf{Spectral Health Check (Verification of Theorem 1).} \textcolor{red}{[\textbf{COMING}...]}.}
%     \label{fig:spectral_check}
%     % Visualization of the eigenspectrum of the autocorrelation matrix $\mathbf{R}$ (Log-scale). The \textcolor{red}{Baseline} (red dashed line) exhibits a sharp decay, indicating spectral collapse and rank deficiency. \textbf{GSR} (blue solid line) effectively "whitens" the spectrum, maintaining a slower decay rate and ensuring numerical stability for the inverse operation
% \end{figure}

\subsubsection{GSR vs. common alternatives.}
To provide a comprehensive assessment, Table \ref{tab:efficiency} compares GSR against other feature augmentation techniques that enrich the latent space, specifically Gaussian sampling (Covariance Shrinkage/Transfer) and Inter-class Mixup.

\textit{Efficiency vs. Accuracy Trade-off.} 
Gaussian-based methods provide strong regularization, particularly for the GACL baseline (e.g., boosting Tiny-ImageNet to 75.21\%). However, this comes at a prohibitive computational cost of $O(D^3)$ due to the Cholesky decomposition required for covariance estimation. This complexity makes them ill-suited for real-time continual learning on edge devices with high-dimensional backbones like DINO-v2 ($D=768$ or $1024$).

In contrast, our GSR operates with linear $O(D)$ complexity, similar to Mixup. While GACL+GSR trails behind the computationally heavy Gaussian methods, the combination of {AnaCP + GSR} yields remarkable results. As shown in the bottom section of Table \ref{tab:efficiency}, AnaCP + GSR achieves {78.81\%} on Tiny-ImageNet and {72.13\%} on ImageNet-R, clearly outperforming both Gaussian variants and Mixup. On CUB-200, it remains highly competitive (85.85\%) compared to the best Gaussian method (85.96\%), while being significantly faster. This confirms that GSR offers the best trade-off between accuracy and efficiency.

\textit{Synergy with AnaCP.} We want to highlight \textit{why AnaCP integrates naturally with our method.} A key ingredient of AnaCP \cite{DBLP:journals/corr/abs-2511-13880} is the Contrastive Projection Layer, which encourages the learned features to form well-separated clusters—a geometric structure closely related to the Neural Collapse phenomenon in contrastive learning \cite{nguyen2024neural}. However, under head–tail (long-tailed) data imbalance, contrastive learning is prone to \textit{spectral (dimensional) collapse} \cite{jing2021understanding}: the representation remains non-degenerate (features are not identical), yet most variation concentrates in only a few directions, making the embedding effectively low-rank. Our method directly addresses this issue. In particular, Theorem \ref{thm:stable_rank} shows that we increase the stable rank of the learned representation, mitigating dimensional collapse and thereby strengthening the feature geometry that AnaCP relies on—ultimately improving AnaCP’s performance.
\begin{table}[h]
\centering
\vspace{-3mm}
\caption{\textbf{Comparison of computational complexity and accuracy} of different feature augmentation methods, using DINO-v2 as the pretrained backbone. $D$ is the feature dimension. See Appendix \ref{sec:appendix_analysis} for details on the computational complexity.}
% Covariance-based Gaussian augmentation requires $O(D^3)$ operations, which becomes expensive for high-dimensional DinoV2 features, whereas Mixup and our GSR scale linearly, $O(D)$. Despite this efficiency gain, GSR remains competitive on GACL and achieves the best overall results when combined with AnaCP (notably on Imagenet-R and Tiny-Imagenet), demonstrating a strong accuracy--efficiency trade-off.
\label{tab:efficiency}
\smallskip
      \renewcommand\arraystretch{1.2}
    \small{
\resizebox{0.9\linewidth}{!}{
\begin{tabular}{l  c  c c c}
\toprule
\multirow{2}{*}{\textbf{Method}} 
& {\textbf{Computational}} 
& \multicolumn{3}{c}{\textbf{Last Accuracy (\%) $\uparrow$}} \\
\cmidrule(lr){3-5}
& \textbf{Complexity} & CUB-200 & Imagenet-R & Tiny-Imagenet \\
\midrule
GACL +  Gauss (cov shrinkage) & $O(D^3)$ &  82.50 & 62.85 & 69.38 \\
GACL +  Gauss (cov transfer) & $O(D^3)$ &  \textbf{83.04} & \textbf{70.85} & \textbf{75.21} \\
GACL + Inter-class mixup & $O(D)$ & 82.43 & 61.53 & 68.76 \\ 
\rowcolor{gray!20} {GACL + GSR (Ours)} & $O(D)$ & {82.46} & {61.58} & 68.78\\
\midrule
AnaCP +  Gauss (cov shrinkage) & $O(D^3)$ &  \textbf{85.96} & 71.36 & 76.73\\
AnaCP +  Gauss (cov transfer) & $O(D^3)$ & \underline{85.85} & 71.01 & 75.78\\
AnaCP + Inter-class mixup & $O(D)$ & 85.69 & \underline{71.76} & \underline{77.62} \\ 
\rowcolor{gray!20} {AnaCP + GSR (Ours)} & $O(D)$ & \underline{85.85} & \textbf{{72.13}} & \textbf{78.81} \\
\bottomrule
\end{tabular}}
}
\vspace{-8mm}
\end{table}

\section{Conclusion}
% In this work, we identified Spectral Collapse as the fundamental cause of failure for the state-of-the-art Analytic Continual Learning (ACL) methods in long-tailed scenarios. We demonstrated that the standard isotropic regularization used in Ridge Regression is insufficient to handle the extreme eigenvalue skewness caused by class imbalance. To address this, we proposed Geometry-Spectral Rectification (GSR), a theoretically grounded framework that acts as an anisotropic spectral filter. By leveraging Spherical Geodesic Mixing and Adaptive Spectral Densification, GSR effectively "heals" the collapsed subspaces of tail classes without hallucinating arbitrary noise or inducing semantic bias from head classes.
% Our extensive experiments demonstrate our effectiveness in the practical but challenging setting of Class Incremental Learning under long-tailed distributions. 
% Future work may explore extending this spectral rectification perspective to update the backbone itself in a resource-constrained manner.

In this work, we identified Spectral Collapse as a critical bottleneck for state-of-the-art Analytic Continual Learning (ACL) methods in long-tailed scenarios. We demonstrated that the standard isotropic regularization in Ridge Regression is insufficient to handle the extreme eigenvalue skewness caused by class imbalance. To address this, we proposed Geometry-Spectral Rectification (GSR), a theoretically grounded framework acting as an anisotropic spectral filter. By leveraging Spherical Geodesic Mixing and Adaptive Spectral Densification, GSR effectively "heals" the collapsed subspaces of tail classes without hallucinating arbitrary noise or inducing semantic bias from head classes. Extensive experiments validate the effectiveness of GSR in the practical yet challenging setting of Long-Tailed Class Incremental Learning.

% \color{red}{
% \textbf{TODO:}

% - Eigenspectrum of Gram matrix? (with vs w/o ours)

% - Condition number over time? (baselines vs ours) 

% - tSNE/PCA (Linear mixup vs Spherical mixup vs Gaussian sampling) 

% - Finish the main tables
% }

% - Modify the symbols for eigenvalues (lambda), ridge coeff (xi), mixing coef (mu)
% \color{black}

% \clearpage
% \newpage
% \section*{Acknowledgements}
% Please insert your acknowledgments here.
% \bibliographystyle{splncs04}
% \bibliography{main}

\clearpage
\newpage
\bibliographystyle{splncs04}
\bibliography{main}

\newpage
\appendix
% \maketitlesupplementary
\setcounter{page}{1}
{\LARGE \textbf{Appendices for }} 
\begin{center}

{\large \textbf{\textit{"Spectral-Aware Analytic Class-Incremental Learning for Long-Tailed Distributions"}}}
\end{center}

% \noindent \textbf{Overview of the Appendix.} 
This appendix provides supplementary materials, theoretical proofs, and detailed experimental results to support the main manuscript. The document is organized as follows: Section \ref{sec:appendix_analysis} presents an in-depth comparative analysis of various feature augmentation strategies, including discussions on their computational complexity and geometric properties. Section \ref{B} provides a theoretical justification for our chosen data augmentation strategy and details the complete mathematical proof for Theorem 1. Section \ref{exp_detail} elaborates on the experimental setups, including dataset protocols, baseline configurations, and implementation details. Finally, Section \ref{additional_exp} reports comprehensive additional experimental results, including the full performance comparison of baselines in standard versus long-tailed settings.

\section{Comparative Analysis of Feature Augmentation Strategies}
\label{sec:appendix_analysis}

This section provides an in-depth analysis of the results presented in Table \ref{tab:efficiency}, comparing the proposed GSR with several baseline feature augmentation methods (Gaussian sampling with covariance shrinkage/covariance transfer), and linear mixup. Specifically, we evaluate the computational complexity (Section \ref{time}) alongside the relative disadvantages of each baseline (Section \ref{advantage}). Furthermore, we explain why GSR is outperformed by Gaussian sampling with covariance transfer when combined with GACL, yet achieves superior performance when combined with AnaCP.

\subsection{Computational Complexity Analysis}
\label{time}

% Efficiency is critical for on-the-fly augmentation during training.

\textbf{Gaussian sampling methods (Covariance Shrinkage/ Transfer):}
Both methods require sampling from a multivariate Gaussian. This involves:
\begin{enumerate}
    \item Constructing the covariance matrix $\hat{{\Sigma}}$ $\longrightarrow \mathcal{C}_1 \approx O(D^2)$.
    \item Performing Cholesky decomposition or Eigen-decomposition to find ${L}$ such that $\hat{{\Sigma}} = {L}{L}^T$. This decomposition scales cubically: $\mathcal{C}_{2} \approx O(D^3)$
    \item Sampling: $\tilde{z} = \mu + L\epsilon, \quad \epsilon \sim N(0, I) \longrightarrow \mathcal{C}_{3} \approx O(D^2)$
\end{enumerate}

Therefore, the overall time complexity of Gaussian sampling methods is: $$\mathcal{C}_{Gaussian} \approx O(D^3)$$

With $D \geq 768$\footnote{$D=768$ is the original dimension of the latent space of ViT/B-16, which we used in our experiments. In GACL and AnaCP, we use $ D=5000$ to implement the non-linear projection operation of these two methods.}, which is computationally expensive to perform repeatedly for every tail class in every batch.

\noindent\textbf{GSR (Ours):}
Our Spherical Mixup involves only vector linear combinations and normalization, thus the corresponding complexity is:
\begin{equation}
    \mathcal{C}_{GSR} \approx O(D)
\end{equation}
This linear complexity makes GSR orders of magnitude faster and negligible in terms of training overhead. Similarly, $O(D)$ is also the time complexity of the linear mixup method.

% provide a geometric intuition as to why GSR yields sub-optimal results compared to covariance transfer in the original feature space (GACL), but demonstrates superior efficacy in the whitened, isotropic space (AnaCP).

% This section provides a more detailed analysis of the results in Table \ref{tab:efficiency}, comparing GSR to other simpler alternatives. We include an analysis of computational complexity, discuss the advantages and disadvantages of each method, and explain why GSR is outperformed by Gaussian sampling with covariance transfer when combined with GACL, yet achieves superior performance when combined with AnaCP.

% In this section, we provide a comparative analysis of our proposed GSR augmentation (Spherical Intra-class Mixup) and two dominant parametric baselines in long-tailed learning: \textit{Covariance Shrinkage} and \textit{Covariance Transfer}. We analyze them based on geometric fidelity, statistical assumptions, and computational complexity.

\subsection{The advantage of GSR against the other feature augmentation methods (Gaussian sampling, linear mixup)}
\label{advantage}

\begin{table}[ht]
\centering
\caption{Comparison of GSR against the other Gaussian sampling techniques}
\vspace{-3mm}
\label{tab:comparison_detailed}
\resizebox{\columnwidth}{!}{%
\begin{tabular}{l|ccc}
\hline
\textbf{Method} & \textbf{Risk / Drawback} & \textbf{Assumption} & \textbf{Complexity} \\ \hline
Cov. Shrinkage & Manifold Intrusion (Blind Noise) & Isotropic Noise & $O(D^3)$ \\
Cov. Transfer & Semantic Bias (Head $\to$ Tail) & Class Homoscedasticity & $O(D^3)$ \\
\textbf{GSR (Ours)} & \textbf{Safe Interpolation, but limited by data samples} & \textbf{None (Data-driven)} & $\mathbf{O(D)}$ \\ \hline
\end{tabular}%
}
\end{table}

% Standard approaches assume features follow a Gaussian distribution $\mathcal{N}(\boldsymbol{\mu}, \boldsymbol{\Sigma})$. However, estimating $\boldsymbol{\Sigma}$ with few samples ($N \ll d$) is ill-posed.

% Standard approaches assume features follow a Gaussian distribution $\mathcal{N}(\boldsymbol{\mu}, \boldsymbol{\Sigma})$. However, estimating $\boldsymbol{\Sigma}$ with few samples ($N \ll d$) is ill-posed.

\subsubsection{vs. Covariance Shrinkage (The "Blind Noise" Problem)}
Covariance Shrinkage attempts to fix the rank-deficiency of the tail class covariance $\boldsymbol{\Sigma}_{tail}$ by adding a regularization term:
\begin{equation}
    \hat{\boldsymbol{\Sigma}}_{CS} = (1-\alpha)\boldsymbol{\Sigma}_{tail} + \alpha \mathbf{I}
\end{equation}
where $\mathbf{I}$ is the identity matrix, $\alpha$ is a scalar coefficient.
\begin{itemize}
    \item \textit{Potential Disadvantage -  Geometric Flaw:} The identity matrix $\mathbf{I}$ represents isotropic noise (a hypersphere of uncertainty). In high-dimensional spaces, the true data manifold is sparse. Adding $\alpha \mathbf{I}$ effectively adds noise in all orthogonal directions perpendicular to the data manifold. This "blind" expansion increases the risk of generated samples falling into the decision regions of other classes (manifold intrusion), reducing class separability.
    \item \textit{GSR Advantage:} Our method does not add external noise. By interpolating strictly between existing samples ($\mathbf{x}_{new} = \text{Norm}(\gamma \mathbf{x}_i + (1-\gamma)\mathbf{x}_j)$), the generated data remains confined to the geodesic path connecting real samples. This ensures the augmented data stays within the valid semantic subspace of the class.
\end{itemize}
\vspace{-2mm}
\subsubsection{vs. Covariance Transfer (The "Bias" Problem)}
Covariance Transfer assumes that tail classes share the same geometric distribution shape as head classes. It approximates the tail covariance using head class statistics:
\begin{equation}
    \hat{\boldsymbol{\Sigma}}_{CT} = \boldsymbol{\Sigma}_{tail} + \beta \boldsymbol{\Sigma}_{head}
\end{equation}
\begin{itemize}
    \item \textit{Potential Disadvantage - Statistical Flaw:} This relies on a strong assumption of homoscedasticity (similar variance) across classes. In reality, a "Head" class (e.g., \textit{Dog}, with high visual variance) may have a completely different feature spread than a "Tail" class (e.g., \textit{Platypus}, with specific features). Forcing the variance of a head class onto a tail class can introduce a significant bias, potentially distorting the tail class representation to look like the head class (as will be discussed in the next sub-section).
    \item \textit{GSR Advantage:} Our approach is non-parametric and data-driven. It does not rely on the statistics of head classes, thereby avoiding the introduction of bias from majority classes. It respects the unique, albeit sparse, structure of the tail class itself.
\end{itemize}

% \subsection{Computational Complexity Analysis}

% Efficiency is critical for on-the-fly augmentation during training.

% \textbf{Gaussian-based Methods (Shrinkage \& Transfer):}
% Both methods require sampling from a multivariate Gaussian. This involves:
% 1. Constructing the covariance matrix $\hat{\boldsymbol{\Sigma}}$ ($O(D^2)$).
% 2. Performing Cholesky decomposition or Eigendecomposition to find $\mathbf{L}$ such that $\hat{\boldsymbol{\Sigma}} = \mathbf{L}\mathbf{L}^T$.
% This decomposition scales cubically:
% \begin{equation}
%     \mathcal{C}_{Gaussian} \approx O(d^3)
% \end{equation}
% With $d=512$, this is computationally expensive to perform repeatedly for every tail class in every batch.

% \textbf{GSR (Ours):}
% Our Spherical Mixup involves only vector linear combinations and normalization:
% \begin{equation}
%     \mathcal{C}_{GSR} \approx O(d)
% \end{equation}
% This linear complexity makes GSR orders of magnitude faster and negligible in terms of training overhead.

\noindent Table \ref{tab:comparison_detailed} summarizes GSR's superiority over the Gaussian sampling techniques.

\subsubsection{vs. Standard Linear Mixup (The "Manifold Sagging" Problem)}
Standard Mixup performs interpolation in Euclidean space: $\mathbf{x}_{lin} = \gamma \mathbf{x}_i + (1-\gamma)\mathbf{x}_j$.
\begin{itemize}
    \item \textit{Potential Disadvantage - Geometric Flaw:} In modern representation learning, features are normalized to lie on a hypersphere $\mathbb{S}^{d-1}$ ($||\mathbf{x}||=1$). The linear chord connecting two points on a sphere lies strictly inside the sphere. Consequently, $||\mathbf{x}_{lin}|| < 1$.
    Therefore, the magnitude of a feature vector often correlates with the model's confidence. By generating samples with smaller norms ("sagging" off the manifold), Linear Mixup produces "weak" features that may be ignored by the cosine classifier or treated as low-quality noise.
    \item \textit{GSR Advantage:} By projecting the interpolated point back onto the sphere ($\ell_2$-normalization), GSR maintains the unit norm constraint, ensuring the generated samples are indistinguishable from real samples in terms of magnitude and lie on the correct geodesic manifold.
\end{itemize}

\subsection{Further explaination of the results in Table \ref{tab:efficiency}}

The results in Table \ref{tab:efficiency} show that Gaussian sampling with cov transfer yields significantly better results than GSR when combined with GACL; however, this method is considerably inferior to GSR when combined with AnaCP, and it significantly better solves the problem of feature collapse. This section will further explain the reason.
% \textcolor{red}{\textbf{[NEED TO CHECK and REVISE, chán quá, dcm]}}

\subsubsection{a. The Feature Space of GACL and the Suitability of Covariance Transfer (CT)}

GACL primarily utilizes features directly from the Pre-trained Model (PTM) or through a simple Random Projection (RP) layer, followed by Ridge Regression for classification.

\begin{itemize}
    \item \textit{Space Characteristics:} In the PTM space, variance directions typically represent shared, class-agnostic semantic factors (e.g., background, camera angle, lighting).
    \item \textit{Mathematics of CT:} Covariance Transfer assumes that the covariance matrix of the majority (Head) class, \(\boldsymbol{\Sigma}_{head}\), contains these rich variance directions. When transferred to the minority (Tail) class:
    \[
    % \begin{equation}
    \hat{\boldsymbol{\Sigma}}_{CT} = \boldsymbol{\Sigma}_{tail} + \beta \boldsymbol{\Sigma}_{head}
% \end{equation}
    \]
    $\Rightarrow$ We can effectively ``borrow'' these shared semantic variations to enrich the Tail class data. Because these variance directions are \textit{shared} across the entire PTM space, adding \(\boldsymbol{\Sigma}_{head}\) does not alter the core semantic identity of the Tail class, thereby expanding the decision boundary in a reasonable manner.
\end{itemize}

\subsubsection{b. The Feature Space of AnaCP and the Failure of Covariance Transfer.}

In this part, we analyze why applying Covariance Transfer (CT) in the original feature space results in performance degradation in AnaCP. 
% when the Contrastive Prototype (CP) layer is learned via a closed-form Ridge solution. 
The failure is caused by a fundamental linear contradiction introduced into the projection matrix:

In the first step, CT augments the Tail class in the original space by injecting variance from the Head class:
\begin{equation}
    \mathbf{z}_{tail}^{aug} = \mathbf{z}_{tail} + \delta, \quad \text{where } \delta \sim \mathcal{N}(0, \beta \Sigma_{head})
\end{equation}
% These augmented samples are basically combined with the original data to form the training matrix \(\tilde{\mathbf{Z}}\). The CP layer aims to find a linear projection \(\mathbf{W}_{CP}\) that maps \(\tilde{\mathbf{Z}}\) to the repulsed prototypes \(\mathbf{Y}_{repulsed}\) using the closed-form Ridge Regression solution:
% \begin{equation}
%     \mathbf{W}_{CP} = (\tilde{\mathbf{Z}}^T \tilde{\mathbf{Z}} + \lambda \mathbf{I})^{-1} \tilde{\mathbf{Z}}^T \mathbf{Y}_{repulsed}
% \end{equation}

% The mathematical flaw arises from the conflicting targets assigned to the same variance directions. For the original Head class samples, \(\mathbf{W}_{CP}\) is optimized to map features varying along the principal components of \(\Sigma_{head}\) to the Head prototype \(\mathbf{y}_{head}\). However, due to the CT augmentation, the matrix \(\mathbf{X}\) now also contains Tail samples that vary along the same directions (\(\Sigma_{head}\)), but their target in \(\mathbf{Y}_{repulsed}\) is the Tail prototype \(\mathbf{y}_{tail}\).

These augmented samples are combined with the original data to form the training matrix \(\tilde{\mathbf{Z}}\). Typically, these features pass through a non-linear mapping \(\phi(\cdot)\) to form the activated feature matrix \(\mathbf{H} = \phi(\tilde{\mathbf{Z}}.\mathbf{W}_{RP})\), where $\mathbf{W}_{RP}$ is a random matrix. The CP layer then aims to find a projection \(\mathbf{W}_{CP}\) that maps \(\mathbf{H}\) to the repulsed prototypes \(\mathbf{Y}_{repulsed}\) using the closed-form Ridge Regression solution:
\begin{equation}
    \mathbf{W}_{CP} = (\mathbf{H}^T \mathbf{H} + \lambda \mathbf{I})^{-1} \mathbf{H}^T \mathbf{Y}_{repulsed}
\end{equation}

The mathematical flaw arises from the \textit{feature entanglement} introduced by the injected variance. By injecting \(\Sigma_{head}\) into the Tail class, the augmented Tail samples inherently share structural properties with the Head class. Even after passing through the non-linear activation \(\phi(\cdot)\), this shared variance translates into highly overlapping activation patterns in the \(\mathbf{H}\) space. 

Consequently, the Ridge solver faces an optimization conflict: it is tasked with mapping these entangled, structurally similar activated features to distinctly repulsed prototypes (\(\mathbf{y}_{head}\) and \(\mathbf{y}_{tail}\)). To minimize the overall mean squared error, the closed-form solution \(\mathbf{W}_{CP}\) is forced to compromise, effectively distorting the weights associated with these shared activation components.

As a result, the learned projection \(\mathbf{W}_{CP}\) becomes corrupted. In the subsequent stage, when features are sampled and forwarded through the CP layer, the projected distributions of the Head and Tail classes collapse towards each other along these distorted components. This severe feature overlap makes it impossible for the final classification layer to establish clear decision boundaries, leading to a drop in performance.

\subsubsection{c. Gaussian Spherical Regression (GSR) Succeeds on AnaCP}

In contrast to the aforementioned limitations of Covariance Transfer, our proposed Gaussian Spherical Regression (GSR) method does not ``borrow'' variance from other classes. Instead, it operates based on the \textit{intrinsic structure} of the target class itself. As established in our previous analysis, by preserving the class-specific information of each tail class, GSR can better circumvent the fundamental drawbacks of Gaussian-based Covariance Transfer, which inadvertently corrupts the original feature space and destroys the intrinsic diversity of the data. 

This success is driven by two main factors:
\begin{itemize}
    \item \textit{Compatibility with AnaCP's Geometry:} Because the CP layer in AnaCP utilizes closed-form Ridge Regression (where the \(L_2\) penalty encourages data clustering) combined with normalized prototypes, the projected data for each class naturally forms tight spherical or ellipsoidal distributions around its respective prototype.
    \item \textit{Mathematics of GSR:} Generating samples via Spherical Interpolation or radius-controlled Gaussian sampling around the Tail class prototype strictly \textit{respects the subspace structure} established by AnaCP. It increases the sample density of the Tail class without leaking variance into the orthogonal directions occupied by other classes, thereby completely avoiding the linear contradictions and feature overlaps caused by Covariance Transfer.
\end{itemize}

% \subsubsection{Theoretical Summary}

% \begin{enumerate}
%     \item \textbf{GACL:} The feature space preserves shared variations \(\rightarrow\) CT performs well because classes can share these variance directions.
%     \item \textbf{AnaCP:} Whitening and Negative Repulsion eliminate shared variations and force classes into specific orthogonal directions \(\rightarrow\) CT fails because it imposes the highly specific variance of one class onto another, destroying the underlying geometric structure.
%     \item \textbf{AnaCP + GSR:} GSR respects the internal spherical/orthogonal cluster structure generated by AnaCP, allowing for safe augmentation of Tail class data without encroaching on the feature spaces of other classes.
% \end{enumerate}

\section{Analysis of Data Augmentation Strategies for Robust Covariance Estimation}
\label{B}

In this work, we employ \textit{Spherical Intra-class Mixup} to augment tail class samples prior to covariance estimation. This section provides a theoretical justification for this choice over the \textit{Conventional} \textit{Gaussian Sampling} approach.

\subsection{Limitations of Gaussian Sampling in Tail classes}
Standard augmentation often involves estimating the empirical mean $\boldsymbol{\mu}_c$ and covariance $\boldsymbol{\Sigma}_c$ of a class $c$, and subsequently sampling $\tilde{\mathbf{z}} \sim \mathcal{N}(\boldsymbol{\mu}_c, \boldsymbol{\Sigma}_c)$. However, this approach is suboptimal for tail classes due to two primary factors:

\begin{itemize}
    \item \textit{Ill-Posed Estimation:} In the training regime where the number of samples $N_c$ is significantly smaller than the feature dimension $D$, the empirical covariance $\boldsymbol{\Sigma}_c$ is rank-deficient and highly sensitive to outliers. Sampling from such an ill-posed distribution risks generating ``hallucinated'' features that deviate significantly from the true class manifold, potentially causing class overlap.
    
    \item \textit{Geometric Mismatch:} Modern representation learning typically constrains features to a unit hypersphere ($\mathcal{S}^{d-1}$). The Gaussian distribution is defined in Euclidean space. Sampling from a Gaussian and projecting back to the hypersphere ($\mathbf{z} \leftarrow \mathbf{z} / \|\mathbf{z}\|_2$) distorts the underlying probability density, often leading to an artificial concentration of mass around the class mean (mode collapse) rather than preserving the intra-class variance structure.
\end{itemize}

\subsection{Geometric Properties of Spherical Mixup}
\label{spherical_mix}
Our approach generates a synthetic sample $\tilde{\mathbf{z}}$ from two support samples $\mathbf{z}_1, \mathbf{z}_2$ via:
\begin{equation}
    \tilde{\mathbf{z}} = \frac{\gamma \mathbf{z}_1 + (1-\gamma)\mathbf{z}_2}{\| \gamma \mathbf{z}_1 + (1-\gamma)\mathbf{z}_2 \|_2}, \quad \text{where } \gamma \sim \text{Beta}(\alpha_c, \alpha_c)
\end{equation}
Geometrically, this operation corresponds to \textit{geodesic interpolation}. Instead of traversing the Euclidean chord (which would violate the unit norm constraint), the synthetic sample moves along the shortest path on the manifold surface between $\mathbf{z}_1$ and $\mathbf{z}_2$.

\subsection{Conservative Densification vs. Extrapolation}
A critical distinction between our approach and standard Mixup lies in the interpolation geometry. Standard linear Mixup generates samples via convex combinations: $\tilde{z} = \alpha z_1 + (1-\alpha)z_2$. In high-dimensional spaces where features are normalized (lying on $\mathcal{S}^{d-1}$), this Euclidean interpolation traverses the "chord" inside the hypersphere. This results in the Norm Shrinkage Phenomenon: synthetic samples have strictly smaller norms ($||\tilde{z}|| < 1$), causing them to drift away from the true data manifold and artificially reducing the variance along principal components.

In contrast, our Spherical Mixup performs geodesic interpolation. By projecting the mixed sample back onto the unit hypersphere, we introduce a non-linear normalization step: $\tilde{z} \leftarrow \tilde{z} / ||\tilde{z}||$. This ensures that:

Manifold Adherence: Synthetic samples remain on the same geometric surface as real data, preserving the distributional statistics of the class.
Variance Preservation: Unlike linear mixup which dampens signal energy, Spherical Mixup maintains unit variance, effectively "inflating" the covariance matrix along the intra-class directions. This acts as a structured spectral regularizer that improves the stable rank $s(G)$ without introducing isotropic noise (as in Ridge Regression). Furthermore, empirical studies in deep feature spaces suggest that intra-class interpolation preserves semantic identity better than extrapolation. By restricting our mixup strictly within the geodesic path between two real samples of the same class, we minimize the risk of crossing decision boundaries (hallucination), ensuring that the injected variance contributes constructively to the class subspace representation.

\subsection{Mathematical Proof for the Statement} \label{sec:appendix_proof}
\subsubsection{Proof for Theorem \ref{thm:stable_rank}}
We will give the full proof for Theorem \ref{thm:stable_rank}. First we have the following lemma
\begin{lemma}[Extremal eigenvalues of a $2\times 2$ symmetric matrix]
\label{lem:2x2}
Let
\[
K=\begin{pmatrix} a & c\\ c & b\end{pmatrix}
\quad\text{with }a,b\in\mathbb{R},\; c\in\mathbb{R}.
\]
Then
\[
\lambda_{\max}(K)=\frac{a+b+\sqrt{(a-b)^2+4c^2}}{2},
\qquad
\lambda_{\min}(K)=\frac{a+b-\sqrt{(a-b)^2+4c^2}}{2}.
\]
\end{lemma}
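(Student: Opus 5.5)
The plan is to compute the eigenvalues of $K$ directly from its characteristic polynomial and then identify which root is the larger one. Since $K$ is real symmetric, both eigenvalues are real, so $\lambda_{\max}$ and $\lambda_{\min}$ are simply the two roots of a real quadratic, ordered.

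First, expand
\[
\det(K-\lambda I)=(a-\lambda)(b-\lambda)-c^2=\lambda^2-(a+b)\lambda+(ab-c^2).
\]
By the quadratic formula, the roots are
\[
\lambda_\pm=\frac{(a+b)\pm\sqrt{(a+b)^2-4(ab-c^2)}}{2}.
\]

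Second, simplify the discriminant:
\[
(a+b)^2-4ab+4c^2=(a-b)^2+4c^2 .
\]
This quantity is a sum of squares, hence nonnegative. So the square root is real, which confirms independently that both roots are real, as symmetry guarantees.

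Third, order the roots. Because $\sqrt{(a-b)^2+4c^2}\ge 0$, the root with the plus sign is at least as large as the one with the minus sign. Therefore
\[
\lambda_{\max}(K)=\lambda_+ ,\qquad \lambda_{\min}(K)=\lambda_- ,
\]
which are exactly the claimed formulas.

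Two sanity checks are available. The degenerate case where the discriminant is zero forces $a=b$ and $c=0$; then $K=aI$ and both formulas correctly return $a$. Also, $\lambda_++\lambda_-=a+b=\operatorname{tr}K$ and $\lambda_+\lambda_-=ab-c^2=\det K$, as they must.

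There is no real obstacle here; the only point requiring care is the discriminant simplification in the second step. This lemma then feeds the main proof of Theorem~\ref{thm:stable_rank}. There we bound $\lambda_{\max}(\widetilde G)$ by $\lambda_{\max}$ of the $2\times2$ matrix of block norms
\[
\begin{pmatrix}\lambda_1+\varepsilon&\eta\\ \eta&\lambda_{r+1}+\bar m\end{pmatrix}.
\]
Applying the formula for $\lambda_{\max}$ to this matrix yields exactly $\Lambda_+$.
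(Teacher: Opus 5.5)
Your proof is correct and follows the paper's own argument essentially line by line: expand $\det(K-\lambda I)$, apply the quadratic formula, and simplify the discriminant to $(a-b)^2+4c^2$. You also state explicitly that this discriminant is nonnegative, so the plus-sign root is the larger one; the paper leaves that ordering step implicit.
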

\begin{proof}
    The eigenvalues $\lambda$ of $K$ are the roots of the characteristic equation $\det(K - \lambda I) = 0$:
\[
\det \begin{pmatrix} a - \lambda & c \\ c & b - \lambda \end{pmatrix} = 0
\]

Expanding the determinant, we obtain the quadratic equation:
\[
(a - \lambda)(b - \lambda) - c^2 = 0
\]
\[
\lambda^2 - (a + b)\lambda + ab - c^2 = 0
\]

Solving for $\lambda$ gives:
\[
\lambda = \frac{(a + b) \pm \sqrt{[-(a + b)]^2 - 4(1)(ab - c^2)}}{2}
\]
Simplifying the expression under the square root gives:
\[
\lambda = \frac{a + b \pm \sqrt{(a - b)^2 + 4c^2}}{2}
\]
The maximum and minimum eigenvalues are thus:
\[
\lambda_{\max}(K) = \frac{a + b + \sqrt{(a - b)^2 + 4c^2}}{2}
\]
\[
\lambda_{\min}(K) = \frac{a + b - \sqrt{(a - b)^2 + 4c^2}}{2}
\]
\end{proof}
We utilize the following lemma as a fact
\begin{lemma}\label{lem:psd_frob}
If $A\succeq 0$ and $B\succeq 0$, then
\[
\|A+B\|_F^2 \;=\; \|A\|_F^2+\|B\|_F^2 + 2\operatorname{Tr}(AB)
\;\ge\; \|A\|_F^2+\|B\|_F^2.
\]
\end{lemma}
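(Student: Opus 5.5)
The plan is to expand the Frobenius norm through the trace inner product and then show that the cross term is nonnegative.

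\textbf{Step 1: the identity.} Write $\|M\|_F^2=\operatorname{Tr}(M^\top M)$. Since $A$ and $B$ are symmetric (being positive semidefinite), we have $(A+B)^\top(A+B)=A^2+AB+BA+B^2$. Taking traces, linearity and the cyclic property $\operatorname{Tr}(BA)=\operatorname{Tr}(AB)$ give
\[
\|A+B\|_F^2=\|A\|_F^2+\|B\|_F^2+2\operatorname{Tr}(AB).
\]
This step is routine.

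\textbf{Step 2: nonnegativity of the cross term.} This is the only step with any content, and the obstacle is mild. The product $AB$ need not be symmetric, let alone positive semidefinite, so we cannot argue directly from the spectrum of $AB$. Instead, I will use the positive semidefinite square root $B^{1/2}$, which exists by the spectral theorem. By cyclicity,
\[
\operatorname{Tr}(AB)=\operatorname{Tr}\big(B^{1/2}AB^{1/2}\big).
\]
The matrix $B^{1/2}AB^{1/2}$ is positive semidefinite, because for every $x$,
\[
x^\top B^{1/2}AB^{1/2}x=(B^{1/2}x)^\top A\,(B^{1/2}x)\ge 0.
\]
Its trace is the sum of its nonnegative eigenvalues, so $\operatorname{Tr}(AB)\ge 0$.

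An alternative route avoids square roots. Diagonalize $A=\sum_i \mu_i u_iu_i^\top$ with $\mu_i\ge 0$. Then
\[
\operatorname{Tr}(AB)=\sum_i \mu_i\,u_i^\top B u_i\ge 0,
\]
since each $u_i^\top B u_i\ge 0$. Either argument suffices.

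\textbf{Step 3: conclusion.} Substituting $\operatorname{Tr}(AB)\ge 0$ into the identity of Step 1 yields the stated inequality
\[
\|A+B\|_F^2\ge \|A\|_F^2+\|B\|_F^2.
\]
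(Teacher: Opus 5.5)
Your proof is correct. The identity in Step 1 uses only the symmetry of $A$ and $B$, and both of your arguments for $\operatorname{Tr}(AB)\ge 0$ are valid: the one via $\operatorname{Tr}(B^{1/2}AB^{1/2})$ and the one via the spectral decomposition of $A$.

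There is no paper proof to compare against. The appendix explicitly invokes this lemma ``as a fact'', so your argument supplies the standard justification that the paper leaves out.

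One consequence of your split is worth noting. Positive semidefiniteness of both matrices enters only through the cross term. So when the paper applies the lemma with $A=G+\tau I$ and $B=\Delta$ in the proof of Theorem~\ref{thm:stable_rank}, it is tacitly assuming $\Delta\succeq 0$. That holds for the mixup construction, where $\Delta=\beta\sum\tilde z\tilde z^\top$ with $\beta\ge 0$, but it is not among the theorem's stated hypotheses.
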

The complete proof for the Theorem \ref{thm:stable_rank} is given by
\begin{proof}
We lower bound the numerator $\|\widetilde G+\tau I\|_F^2$ and upper bound the
denominator $\|\widetilde G+\tau I\|_2^2$. Since $\widetilde G+\tau I$ is symmetric PSD,
\[
\|\widetilde G+\tau I\|_2 = \lambda_{\max}(\widetilde G+\tau I)=\lambda_{\max}(\widetilde G)+\tau.
\]
Thus it suffices to bound $\lambda_{\max}(\widetilde G)=\lambda_{\max}(G + \Delta)$.

Let $x\in\mathbb{R}^d$ with $\|x\|_2=1$, and decompose it as
\[
x = U_h a + U_t b,\qquad a\in\mathbb{R}^r,\; b\in\mathbb{R}^{d-r},
\qquad \|a\|_2^2+\|b\|_2^2=1.
\]
Because $U_h$ spans eigenvectors with eigenvalues in $[\lambda_r,\lambda_1]$
and $U_t$ spans eigenvectors with eigenvalues in $[\lambda_D,\lambda_{r+1}]$,
we have the Rayleigh quotient bound
\[
x^\top G x
= a^\top (U_h^\top \Delta U_h)a + b^\top (U_t^\top G U_t)b
\le \lambda_1\|a\|_2^2 + \lambda_{r+1}\|b\|_2^2.
\]
For the perturbation term,
\[
x^\top M x
= a^\top(U_h^\top \Delta U_h)a + 2a^\top(U_h^\top \Delta U_t)b + b^\top(U_t^\top \Delta U_t)b.
\]
Using Cauchy--Schwarz and the assumed operator norm bounds,
\[
a^\top(U_h^\top \Delta U_h)a \le \varepsilon\|a\|_2^2,\qquad
b^\top(U_t^\top \Delta U_t)b \le \bar m\,\|b\|_2^2,\qquad
2a^\top(U_h^\top \Delta U_t)b \le 2\eta\|a\|_2\|b\|_2.
\]
Therefore,
\[
x^\top(G + \Delta)x
\le (\lambda_1+\varepsilon)\|a\|_2^2 + (\lambda_{r+1}+\bar m)\|b\|_2^2
+2\eta\|a\|_2\|b\|_2.
\]
Define the $2\times 2$ symmetric matrix
\[
K_+ \;\triangleq\;
\begin{pmatrix}
\lambda_1+\varepsilon & \eta\\
\eta & \lambda_{r+1}+\bar m
\end{pmatrix}.
\]
Then the right-hand side equals
$\begin{bmatrix}\|a\|_2\\ \|b\|_2\end{bmatrix}^\top
K_+
\begin{bmatrix}\|a\|_2\\ \|b\|_2\end{bmatrix}$.
Since $\|a\|_2^2+\|b\|_2^2=1$, the maximum over all unit $x$ equals
$\lambda_{\max}(K_+)$. Hence,
\[
\lambda_{\max}(G + \Delta)
=\max_{\|x\|_2=1} x^\top(G + \Delta)x
\le \lambda_{\max}(K_+).
\]
By Lemma~\ref{lem:2x2}, $\lambda_{\max}(K_+)=\Lambda_+$, so
\[
\|\widetilde G+\tau I\|_2
= \lambda_{\max}(G + \Delta)+\tau
\le \Lambda_+ + \tau.
\]
Now for lower bound, by Lemma~\ref{lem:psd_frob},
\[
\|\widetilde G+\tau I\|_F^2
\ge S_\tau + \|\Delta\|_F^2.
\]
We now lower bound $\|\Delta\|_F^2$ using the tail block.
In the orthonormal basis $U=[U_h\;U_t]$, the Frobenius norm is invariant:
\[
\|\Delta\|_F^2 = \|U^\top \Delta U\|_F^2
= \left\|
\begin{pmatrix}
U_h^\top \Delta U_h & U_h^\top \Delta U_t\\
U_t^\top \Delta U_h & U_t^\top \Delta U_t
\end{pmatrix}
\right\|_F^2
\ge \|U_t^\top \Delta U_t\|_F^2.
\]
Let $\{\mu_j\}_{j=1}^{d-r}$ be the eigenvalues of the PSD matrix $U_t^\top \Delta U_t$.
The assumption $\lambda_{\min}(U_t^\top \Delta U_t)\ge m$ implies $\mu_j\ge m$ for all $j$,
so
\[
\|U_t^\top \Delta U_t\|_F^2
=\sum_{j=1}^{d-r}\mu_j^2
\ge (d-r)m^2.
\]
Therefore,
\[
\|\widetilde G+\tau I\|_F^2
\ge S_\tau + (d-r)m^2.
\]
Combine the bounds to get stable rank
\[
\text{sr}(\widetilde G+\tau I)
=\frac{\|\widetilde G+\tau I\|_F^2}{\|\widetilde G+\tau I\|_2^2}
\ge
\frac{S_\tau + (d-r)m^2}{(\Lambda_+ + \tau)^2},
\]
which proves \eqref{eq:sr_lower_bound}. Since $\text{sr}(G+\tau I)=\dfrac{S_\tau}{(\lambda_1+\tau)^2}$, a sufficient condition for
$\text{sr}(\widetilde G+\tau I)>\text{sr}(G+\tau I)$ is
\[
\frac{S_\tau + (d-r)m^2}{(\Lambda_+ + \tau)^2}
>
\frac{S_\tau}{(\lambda_1+\tau)^2},
\]
which is equivalent (after cross-multiplying positive denominators) to
\eqref{eq:sr_strict_condition_general}. This completes the proof.
\end{proof}

We also acknowledge from a theoretical viewpoint, however, a higher stable rank does not necessarily lead to better generalization performance. An increase in stable rank means that the sum of the remaining singular/eigenvalues becomes larger relative to $\lambda_1$, but it does not necessarily imply that all the remaining $\lambda_i$'s increase uniformly. In any case, stable rank is still a useful metric for indicating that our method has the potential to generalize better, although it should not be interpreted as an if-and-only-if condition. We believe that it is promising to further demonstrate the improvement of our method using other metrics, such as Spectral entropy/Tail energy ratio... and we leave these results for future work.

% --- Optional short corollary: the clean "tail-only injection" case ------------

\section{Experimental setup in details}
\label{exp_detail}
\subsection{Datasets and Protocols.} We evaluate our method on 4 common benchmark datasets widely used in Class-Incremental Learning (CIL), but in Long-Tailed setting: \textit{CIFAR-100}, \textit{ImageNet-R}, \textit{Tiny-ImageNet}, and \textit{CUB-200}. To simulate realistic Long-Tailed CIL scenarios, we adopt a shuffled task assignment in the spirit of \cite{liu2022long}, where head and tail classes co-occur within each task. Particularly, 
% \begin{itemize}
%     \item \textit{Class Splitting:} For each original dataset, we divide classes into $T=10$ disjoint tasks, where classes arrive sequentially.
%     \item \textit{Long-Tailed Imbalance:} We construct long-tailed variants by exponentially decaying the number of training samples per class. The imbalance ratio $\rho = N_{\max} / N_{\min}$ is modified based on the number of samples/class in each dataset, creating a severe "tail" where minority classes have as few as 5 samples for the main experiments.
% \end{itemize}
\begin{itemize}
  \item[--] \textbf{Class Splitting:} For each original dataset, we randomly
  shuffle the classes and divide them into $T = 10$ disjoint tasks that arrive
  sequentially; the shuffling ensures that every task contains a mixture of
  head and tail classes.
  \item[--] \textbf{Imbalance Construction:} We use a two-level (step)
  imbalance~\cite{cao2019learning,DBLP:conf/cvpr/CuiJLSB19}: a fraction of classes (\emph{head};
  $30\%$ by default) retain their full training set, while the remaining
  (\emph{tail}) classes are subsampled to a small fixed shot count $N_{\min}$
  ($5$ samples in the main experiments). The imbalance ratio is defined as
  $\rho = N_{\max}/N_{\min}$, where $N_{\max}$ is the head-class sample count;
  we vary $\rho$ by changing the tail shot count $N_{\min}$.
\end{itemize}

\subsection{Baselines.} To comprehensively evaluate the effectiveness of GSR, we compare it against a wide range of representative methods, categorized into two groups:

% \textcolor{red}{[NEED TO CHECK AND REVISE]}

\begin{itemize}
    \item \textit{Analytic Class-Incremental Learning (State-of-the-Arts):} This is our main comparison group, representing the state-of-the-art in analytic learning. We evaluate our method against:
    
    \begin{itemize}
        \item \textit{GACL} \cite{GACL_Zhuang_NeurIPS2024}: A generalized analytic continual learning framework that utilizes standard Ridge Regression. This serves as the fundamental baseline to demonstrate the vulnerability of standard isotropic regularization under spectral collapse.
        \item \textit{RanPAC} \cite{mcdonnell2023ranpac}: A leading method that employs random projections to expand the feature space before applying RLS, aiming for better feature decorrelation.
        \item \textit{AnaCP} \cite{DBLP:journals/corr/abs-2511-13880}: The current state-of-the-art in RLS-based ACL, which utilizes analytic contrastive projection to encourage well-separated clusters.
        \item \textit{AIR} \cite{AIR_Fang_arXiv2024}: Analytic Imbalance Rectifier. This is a crucial baseline as it is the only existing RLS-based method explicitly designed for long-tailed settings via inverse-frequency re-weighting. Comparing with AIR highlights the superiority of our geometric-spectral rectification over simple scalar modulations.
    \end{itemize}
    
    % \textbf{C-FSL} [Ref] and \textbf{ALICE} [Ref]. More importantly, we benchmark against the current leading methods: \textbf{RanPAC} [10], which utilizes random projections for feature decorrelation, and \textbf{AnaCP} [11], which focuses on analytic covariance profiling.
    
    % \item \textit{Traditional Optimization-based CIL:} To demonstrate the efficiency gap, we include classic gradient-based methods such as \textbf{EWC} [Ref] and \textbf{LwF} [Ref]. Following the standard analytic learning protocol, we adapt these methods to the fixed backbone setting, where only the classifier heads are fine-tuned.
    
    \item \textit{Covariance and Prototype-based Methods:} Since our method operates on feature covariance geometry and prototype representations, we benchmark against established statistical and simple approaches:
    \begin{itemize}
        \item \textit{SimpleCIL} \cite{DBLP:journals/corr/abs-2210-04428}: A baseline that directly uses frozen PTM features with the Nearest Class Mean classification strategy. As shown in our analysis, it often outperforms complex RLS methods in long-tailed scenarios.
        \item \textit{SLDA} \cite{Hayes_2020_CVPR_Workshops}\& \textit{KLDA} \cite{momeni2025continual}: Streaming and Kernelized Linear Discriminant Analysis, which maintain running means and a shared global covariance matrix for incremental classification.
        \item \textit{FeCAM} \cite{goswami2023fecam}: Feature Covariance Alignment. It exploits the heterogeneity of class distributions by estimating class-specific covariance matrices. 
        % This comparison demonstrates the advantage of our Spherical Mixup strategy over standard empirical covariance estimation in few-shot/tail regimes.
    \end{itemize}
    
    % Since GSR operates on feature covariance geometry, we also adapt \textbf{FeCAM} [Ref] and \textbf{SLCA} [Ref]—originally designed for few-shot or base CIL tasks—to our Long-Tailed Incremental setting. This comparison highlights the superiority of our \textit{Spherical Rectification} over standard covariance estimation techniques.
\end{itemize}

\subsection{Implementation Details.}
\begin{itemize}
    \item \textbf{Backbone:} Consistent with recent SOTA analytic methods \cite{DBLP:journals/corr/abs-2511-13880, mcdonnell2023ranpac}, we utilize frozen Pre-Trained Models (PTMs) to extract features. Specifically, we report results using \textit{DINO-v2} and \textit{MoCo-v3} to demonstrate robustness across different feature spaces. 
    \item \textbf{Hyperparameters:} For GSR, the base mixing intensity $\alpha_{base}$ (Eq. 8) is set to 0.6, and the decay rate $\xi$ is set to $5 \times 10^{-3}$. The ridge regularization parameter $\tau$ is set to $\tau = 0.01$ for DINO-v2, and $\tau=0.001$ for MoCo-v3 backbone, $\beta$ is set to 1 for simplicity.
    % as our spectral rectification inherently stabilizes the inversion.
    % \item \textbf{Baselines:} We compare GSR against:
    % (1) \textit{Traditional CIL:} EWC, LwF (adapted for fixed backbones);
    % (2) \textit{Analytic CIL:} C-FSL, ALICE, and the current SOTA methods \textbf{RanPAC} [10] and \textbf{AnaCP} [11].
    % (3) \textit{Covariance-based methods:} We also adapt \textbf{FeCAM} and \textbf{SLCA} to the incremental setting as strong baselines for covariance estimation.
\end{itemize}

\subsection{Evaluation Metrics.} We report the performance using:
\begin{itemize}
    \item \textit{Last Accuracy ($A_{last}$):} The overall test accuracy after learning the final task $T$.
    \item \textit{Average Accuracy ($A_{avg}$):} The mean of test accuracies calculated after each task step, reflecting the stability of the learning curve.
    % \item \textbf{Forgetting ($F$):} The average performance drop of old classes after learning new tasks.
\end{itemize}
All experiments are averaged over 3 random seeds to ensure statistical significance.

\section{Additional experimental results}
\label{additional_exp}

 Figure \ref{fig:standard_longtail_full} illustrates a comprehensive performance comparison of various ACL baselines under both standard (balanced) and long-tailed CIL settings. The results reveal a catastrophic performance drop in the long-tailed regime, most notably among state-of-the-art RLS-based methods such as AnaCP, RanPAC, and GACL. Intriguingly, this trend is highly consistent: simplistic baselines like SimpleCIL—which rely merely on a basic Nearest Class Mean strategy—exhibit remarkable resilience and surprisingly outperform these sophisticated SOTA methods under severe imbalance. This stark contrast underscores the critical need to diagnose the root cause of this vulnerability and to develop robust solutions for RLS-based methods, ensuring they can maintain their upper-bound performance even in challenging settings of long-tailed scenarios.

% 1. Full exp longtailed vs standard. Figure \ref{fig:standard_longtail_full}.

\begin{figure}
    \centering
    \includegraphics[width=\linewidth]{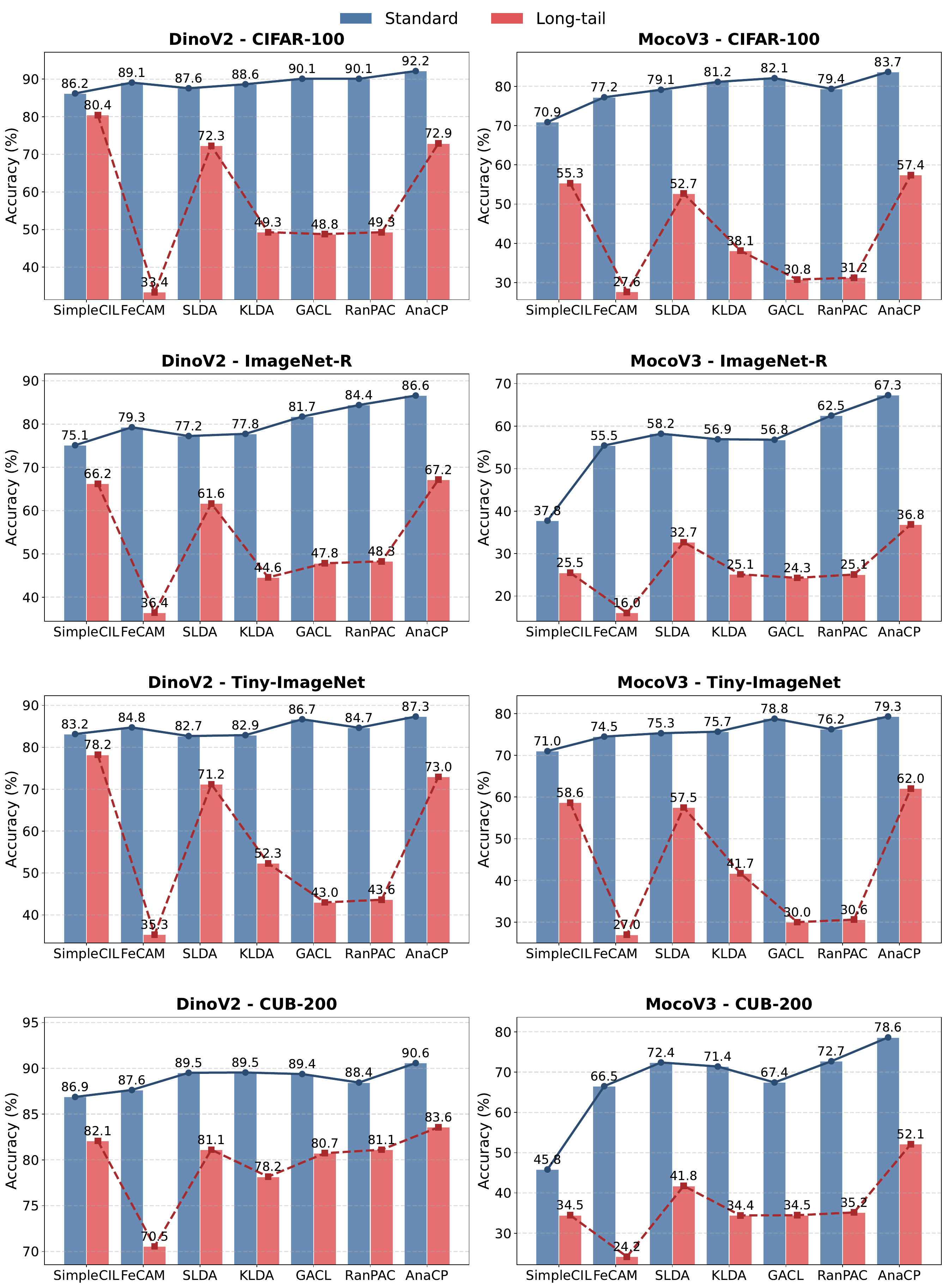}
    \caption{Performance of baselines in standard vs long-tailed settings - $A_{last}$ (\%) on different datasets.}
    \label{fig:standard_longtail_full}
\end{figure}

% 2. Eigenspectrum Analysis (GACL vs AnaCP) 

% 3. Inter-class vs Intra-class Variance

% 4. Cosine Similarity of Principal Directions

% 5. Changing hyper-params $\xi, \alpha_{base}, \beta$

% ---- Bibliography ----
%
% BibTeX users should specify bibliography style 'splncs04'.
% References will then be sorted and formatted in the correct style.
%

\end{document}